\pgfplotsset{compat=1.14} 
\tikzstyle{vertex} = [fill,shape=circle,node distance=30pt]
\tikzstyle{edge} = [fill,opacity=.6,fill opacity=.5,line cap=round, line join=round, line width=10pt]
\tikzstyle{elabel} =  [fill,shape=circle,node distance=30pt,fill opacity=.9]
\newtheorem{theorem}{Theorem}
\newtheorem{definition}{Definition}
\newtheorem{proposition}{Proposition}
\newtheorem{corollary}{Corollary}
\newtheorem{lemma}{Lemma}
\begin{document}
%
\title{Tensor Entropy for Uniform Hypergraphs}
%
%
%
%

\author{Can~Chen, ~Indika~Rajapakse
\IEEEcompsocitemizethanks{\IEEEcompsocthanksitem C. Chen is with the Department of Mathematics and the Department of Electrical Engineering and Computer Science, University of Michigan, Ann Arbor,
MI, 48109.\protect\\
E-mail: canc@umich.edu
\IEEEcompsocthanksitem I. Rajapakse is with the Department of Computational Medicine \& Bioinformatics, Medical School and the Department of Mathematics, University of Michigan, Ann Arbor, MI, 48109.
\protect\\
E-mail: indikar@umich.edu}
\thanks{Manuscript received February 28, 2020; revised May 13, 2020.}}

%
%

\markboth{IEEE Transactions on Network Science and Engineering,~Vol.~X, No.~X, X~20XX}%
{Chen \MakeLowercase{\textit{et al.}}: Tensor Entropy for Uniform Hypergraphs}
%



\IEEEtitleabstractindextext{%
\begin{abstract}
In this paper, we develop the notion of entropy for uniform hypergraphs via tensor theory. We employ the probability distribution of the generalized singular values, calculated from the higher-order singular value decomposition of the Laplacian tensors, to fit into the Shannon entropy formula. We show that this tensor entropy is an extension of von Neumann entropy for graphs. In addition, we establish results on the lower and upper bounds of the entropy and demonstrate that it is a measure of regularity for uniform hypergraphs in simulated and experimental data. We exploit the tensor train decomposition in computing the proposed tensor entropy efficiently. Finally, we introduce the notion of robustness for uniform hypergraphs.
\end{abstract}

\begin{IEEEkeywords}
uniform hypergraphs, entropy, tensor decompositions, pattern recognition, random hypergraphs.
\end{IEEEkeywords}}

\maketitle

\IEEEdisplaynontitleabstractindextext

%
\IEEEpeerreviewmaketitle

\IEEEraisesectionheading{\section{Introduction}\label{sec:introduction}}
\IEEEPARstart{M}{any} real world complex systems can be decomposed and analyzed using networks. There are two classical types of complex networks, scale-free networks and small world networks, which have provided insights in social sciences, cell biology, neuroscience and computer science \cite{Amaral11149,scale_free,small_world}. Recent advancements in genomics technology, such as genome-wide chromosome conformation capture (Hi-C), have inspired us to consider the human genome as a dynamic graph \cite{Rajapakse711,RIED20171}. Studying such dynamic graphs often requires identifying the changes in network properties, such as degree distribution, path lengths and clustering coefficients \cite{dynamic_network,MONNIG2018347,Ranshous:2015:ADD:3160212.3160218}.

Numerous techniques have been developed for anomaly detection based on evaluating similarities between graphs \cite{a19019efc43b4c0cab1fb8c22b8e040e,10.1007/s13174-010-0003-x}. A classic approach for detecting anomalous timestamps during the evolution of dynamic graphs is comparing two consecutive graphs using distance or similarity functions. A comprehensive survey on similarity measures can be found in \cite{cha2007comprehensive}. Two popular measures, Hamming distance \cite{6772729} and Jaccard distance \cite{10.1038/234034a0}, are often problem-specific and sensitive to small perturbations or scaling, thus providing limited understanding of variations between graphs \cite{10.1214/18-AOAS1176}. Model-agnostic approaches, such as eigenvalue based/spectral measures, are more flexible in their representations and interpretations. Therefore, these approaches can more appropriately quantify the global structural complexity of graphs \cite{Simonyi1993GraphEA,10.1145/1134271.1134282,7456290}.

The von Neumann entropy of a graph, first introduced by Braunstein et al. \cite{graph_entropy}, is a spectral measure used in structural pattern recognition. The intuition behind the measure is linking the graph Laplacian to density matrices from quantum mechanics, and measuring the complexity of the graphs via the von Neumman entropy of the corresponding density matrices \cite{10.1093/comnet/cny028}. Additionally, the measure can be viewed as the information theoretic Shannon entropy, i.e., 
\begin{equation}\label{eq:1}
\textsc{S} = -\sum_{j} \eta_j\ln{\eta_j},
\end{equation}
where, $\eta_j$ are the normalized eigenvalues of the Laplacian matrix of a graph such that $\sum_j \eta_j=1$. Passerini and Severini \cite{Passerini2008TheVN} observed that the von Neumman entropy of a graph tends to grow with the number of connected components, the reduction of long paths and the increase of nontrivial symmetricity, and suggested that it can be viewed as a measure of regularity. They also showed that the entropy (\ref{eq:1}) is upper bounded by $\ln{(n-1)}$ where $n$ is the number of vertices of a graph.

However, most data representations are multidimensional, and using graph models to describe them may lose information \cite{7761624}.  A hypergraph is a generalization of a graph in which a hyperedge can join any number of vertices \cite{hypergraph1}, see Figure \ref{fig:0}. Thus, hypergraphs can represent multidimensional relationship unambiguously \cite{7761624}. Examples of hypergraphs include co-authorship networks, film actor/actress networks and protein-protein interaction networks \cite{newman2010networks}. The authors in \cite{7761624} also mention that hypergraphs require less storage space than graphs which may accelerate computation. Moreover, a hypergraph can be represented by a tensor if its hyperedges contain the same number of vertices (referred to as a uniform hypergraph). Tensors are multidimensional arrays generalized from vectors and matrices, preserving multidimensional patterns and capturing higher-order interactions and couplings within multiway data \cite{chen_2020}. Tensor decompositions such as CANDECOMP/PARAFAC decomposition (CPD), higher-order singular value decomposition (HOSVD) and tensor train decomposition (TTD) can facilitate efficient computations and contain contextual interpretations regarding the target data tensors \cite{8187112, 4359192,WANG201986,WILLIAMS20181099}. 

\definecolor{mygray}{gray}{0.95}

\begin{figure}[hbt!]
\centering
\tcbox[colback=mygray]{
\begin{tikzpicture}[scale=1.5]
\node[vertex,text=white,scale=0.7] (v1) {1};
\node[vertex,below of=v1,text=white,scale=0.7, node distance=20pt] (v2) {2};
\node[vertex,below of=v2,text=white,scale=0.7, node distance=20pt] (v3) {3};
\node[vertex,right of=v3,mygray,text=white,scale=0.7, node distance=17.32pt] (v4) {};
\node[vertex,below of=v4,text=white,scale=0.7, node distance=10pt] (v4) {4};
\node[vertex,right of=v4,mygray,text=white,scale=0.7, node distance=17.32pt] (v5) {};
\node[vertex,below of=v5,text=white,scale=0.7, node distance=10pt] (v6) {5};
\node[vertex,left of=v3,mygray,text=white,scale=0.7, node distance=17.32pt] (v7) {};
\node[vertex,below of=v7,text=white,scale=0.7, node distance=10pt] (v8) {6};
\node[vertex,left of=v8,mygray,text=white,scale=0.7, node distance=17.32pt] (v9) {};
\node[vertex,below of=v9,text=white,scale=0.7, node distance=10pt] (v10) {7};
\node[below of=v3,yshift=-3mm]  (A) {\textbf{A}};

\node[vertex,right of=v1,text=white,scale=0.7,node distance=120pt] (v11) {1};
\node[vertex,below of=v11,text=white,scale=0.7, node distance=20pt] (v12) {2};
\node[vertex,below of=v12,text=white,scale=0.7, node distance=20pt] (v13) {3};
\node[vertex,right of=v13,mygray,text=white,scale=0.7, node distance=17.32pt] (v14) {};
\node[vertex,below of=v14,text=white,scale=0.7, node distance=10pt] (v14) {4};
\node[vertex,right of=v14,mygray,text=white,scale=0.7, node distance=17.32pt] (v15) {};
\node[vertex,below of=v15,text=white,scale=0.7, node distance=10pt] (v16) {5};
\node[vertex,left of=v13,mygray,text=white,scale=0.7, node distance=17.32pt] (v17) {};
\node[vertex,below of=v17,text=white,scale=0.7, node distance=10pt] (v18) {6};
\node[vertex,left of=v18,mygray,text=white,scale=0.7, node distance=17.32pt] (v19) {};
\node[vertex,below of=v19,text=white,scale=0.7, node distance=10pt] (v20) {7};
\node[below of=v13,yshift=-3mm]  (B) {\textbf{B}};
\begin{pgfonlayer}{background}
\begin{scope}[transparency group,opacity=.9]
\draw[edge,color=orange] (v1) -- (v2) -- (v3);
\draw[edge,color=red] (v3) -- (v4) -- (v6);
\draw[edge,color=blue] (v3) -- (v8) -- (v10);

\draw[edge,color=orange] (v11) -- (v12);
\draw[edge,color=red] (v12) -- (v13) -- (v14) -- (v16);
\draw[edge,color=blue] (v13) -- (v18) -- (v20);

\end{scope}
\end{pgfonlayer}
\end{tikzpicture}
}
\caption{\textbf{Hypergraphs.} (A) A 3-uniform hypergraph with hyperedges $e_1=\{1,2,3\}$, $e_2=\{3,4,5\}$ and $e_3=\{3,6,7\}$. (B) A non-uniform hypergrah with hyperedges $e_1=\{1,2\}$, $e_2=\{2,3,4,5\}$ and $e_3=\{3,6,7\}$.}
\label{fig:0}
\end{figure}
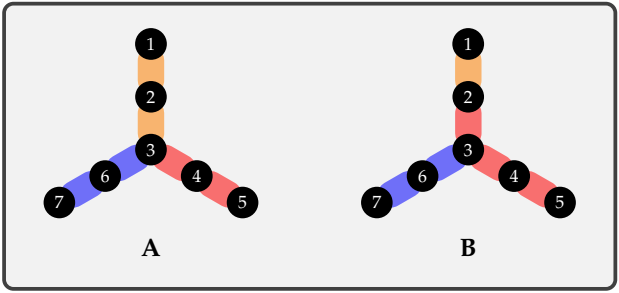

Hypergraph entropy has been recently explored by Hu et al. \cite{Hu2019}, Bloch et al. \cite{10.1007/978-3-030-14085-4_12} and Wang et al. \cite{8283797}.  In \cite{Hu2019}, Hu et al. utilized the probability distribution of the vertex degrees to fit into the Shannon entropy formula and established its lower and upper bounds for special hypergraphs. The degree-based entropy solely depends on the degree distributions of hypergraphs, thus failing to capture comprehensive information, such as path lengths and clustering patterns. Similarly, \cite{10.1007/978-3-030-14085-4_12} defined a hypergraph entropy using incidence matrices, but this formulation may lose higher-order structural information hidden in the hypergraphs, such as nontrivial symmetricity. Furthermore, \cite{8283797} constructed a hypergraph entropy using vertex weighting scores, calculated from a density estimate technique, to select significant lines for model fitting. The score of a vertex (i.e, a fitting line) relies on residuals measured with the Sampson distance under some kernel functions from the line to the data points. Hence, the entropy is difficult to compute and cannot be directly applied to general forms of hypergraphs. 

Based on the works \cite{doi:10.1137/S0895479896305696,Passerini2008TheVN,10.1093/comnet/cny028}, we present a new spectral measure called tensor entropy, which can decipher topological attributes of uniform hypergraphs. For example,  the hypergraph in Figure \ref{fig:0}A with an additional hyperedge $e_4=\{2,4,6\}$ has a higher tensor entropy than that with $e_4=\{1,2,4\}$ because the latter destroys the hypergraph symmetrcity with longer average path length. The key contributions of this paper are as follows:
\begin{itemize}
\item We introduce a new notion of entropy for uniform hypergraphs based on the HOSVD of the corresponding Laplacian tensors. We establish results on the lower and upper bounds of the proposed tensor entropy, and provide a formula for computing the entropy of complete uniform hypergraphs.
\item We adapt a fast and memory efficient TTD-based computational framework in computing the proposed tensor entropy for uniform hypergraphs. 
\item We create two simulated datasets, a hyperedge growth model and a Watts-Strogatz model for uniform hypergraphs. We demonstrate that the proposed tensor entropy is a measure of regularity relying on the vertex degrees, path lengths, clustering coefficients and nontrivial symmetricity for uniform hypergraphs. Further, we present applications to three real world examples:  a primary school contact dataset, a mouse neuron endomicroscopy dataset and a cellular reprogramming dataset. The final example demonstrates the efficacy of the TTD-based computational framework in computing the proposed tensor entropy.
\item We perform preliminary explorations of tensor eigenvalues in the entropy computation and the notion of robustness for uniform hypergraphs. 
\end{itemize}

The paper is organized into five sections. We first introduce the basics of tensor algebra including tensor products, tensor unfolding, higher-order singular value decomposition and tensor train decomposition in section \ref{sec:2.1}. In section \ref{sec:2.2}, we discuss the notion of uniform hypergraphs and extend graph-based definitions to describe uniform hypergraphs' structural properties. We then propose a new form of entropy for uniform hypergraphs with several theoretical results in section \ref{sec:2.4}. In section \ref{sec:2.3}, we exploit the tensor train decomposition to accelerate the tensor entropy computation. Six numerical examples are presented in section \ref{sec:3}.  Finally, we discuss directions for future research in section \ref{sec:dis}  and conclude in section \ref{sec:5}. For ease of reading, we provide a notation table in Appendix A.

\section{Method}
\subsection{Tensor preliminaries}\label{sec:2.1}

We take most of the concepts and notations for tensor algebra from the comprehensive works of Kolda et al. \cite{doi:10.1137/07070111X, Kolda06multilinearoperators}. A \textit{tensor} is a multidimensional array. The \textit{order} of a tensor is the number of its dimensions, also known as \textit{modes}. A $k$-th order tensor usually is denoted by $\textsf{X}\in \mathbb{R}^{n_1\times n_2\times  \dots \times n_k}$.  It is therefore reasonable to consider scalars $x\in\mathbb{R}$ as zero-order tensors, vectors $\textbf{v}\in\mathbb{R}^{n}$ as first-order tensors, and matrices $\textbf{A}\in\mathbb{R}^{m\times n}$ as second-order tensors. For a third-order tensor, \textit{fibers} are named as \textit{column} ($\textsf{X}_{:j_2j_3}$), \textit{row} ($\textsf{X}_{j_1:j_3}$) and \textit{tube} ($\textsf{X}_{j_1j_2:}$), while \textit{slices} are named as \textit{horizontal} ($\textsf{X}_{j_1::}$), \textit{lateral} ($\textsf{X}_{:j_2:}$) and \textit{frontal} ($\textsf{X}_{::j_3}$), see Figure \ref{fig1}. A tensor is called \textit{cubical} if every mode is the same size, i.e., $\textsf{X}\in \mathbb{R}^{n\times n\times  \dots \times n}$. A cubical tensor \textsf{X} is called \textit{supersymmetric} if $\textsf{X}_{j_1j_2\dots j_k}$ is invariant under any permutation of the indices, and is called \textit{diagonal} if $\textsf{X}_{j_1j_2\dots j_k}= 0$ except $j_1=j_2=\dots=j_k$.

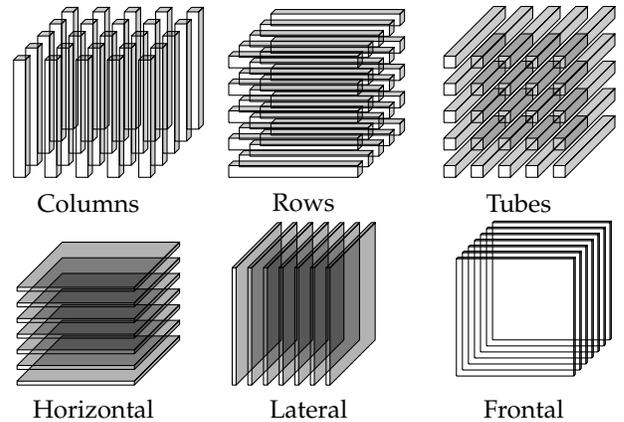
\begin{figure}[ht]
\centering
\begin{tikzpicture}[scale=0.52]
\foreach \x in {0,0.8,1.6,2.4,3.2} {
	\foreach \z in {0,0.8,1.6,2.4,3.2} {
\path (\x,0,0.3-\z) coordinate (A)     (\x+0.3,0,0.3-\z) coordinate (B)
           (\x+0.3,0,-\z) coordinate (C)    (\x,0,-\z) coordinate (D)
           (\x,3,0.3-\z) coordinate (E)     (\x+0.3,3,0.3-\z) coordinate (F)
           (\x+0.3,3,-\z) coordinate (G)      (\x,3,-\z) coordinate (H);
\draw [fill=black,rounded corners = 0.1pt,fill opacity=0.2] (A) -- (B) -- (C) -- (G) -- (F) -- (B)  (A)--(E)--(F)--(G)--(H) --(E);
}
}
\draw[xshift=1.8cm,yshift=-0.3cm]
node[below]
{Columns};

\foreach \y in {0,0.7,1.4,2.1,2.8} {
	\foreach \z in  {0,0.7,1.4,2.1,2.8} {
\path (0+5.5,\y,0.3-\z) coordinate (A)     (3+5.8,\y,0.3-\z) coordinate (B)
           (3+5.8,\y,-\z) coordinate (C)    (0+5.5,\y,-\z) coordinate (D)
           (0+5.5,\y+0.3,0.3-\z) coordinate (E)     (3+5.8,\y+0.3,0.3-\z) coordinate (F)
           (3+5.8,\y+0.3,-\z) coordinate (G)      (0+5.5,\y+0.3,-\z) coordinate (H);
\draw [fill=black,rounded corners = 0.1pt,fill opacity=0.2] (A) -- (B) -- (C) -- (G) -- (F) -- (B)  (A)--(E)--(F)--(G)--(H) --(E);
}
}
\draw[xshift=7.3cm,yshift=-0.3cm]
node[below]
{Rows };

\foreach \x in {0,0.7,1.4,2.1,2.8}{
	\foreach \y in {0,0.7,1.4,2.1,2.8} {
\path (\x+11,\y,0.3-0) coordinate (A)     (\x+11.3,\y,0.3-0) coordinate (B)
           (\x+11.3,\y,-3) coordinate (C)    (\x+11,\y,-3) coordinate (D)
           (\x+11,\y+0.3,0.3-0) coordinate (E)     (\x+11.3,\y+0.3,0.3-0) coordinate (F)
           (\x+11.3,\y+0.3,-3) coordinate (G)      (\x+11,\y+0.3,-3) coordinate (H);
\draw [fill=black,rounded corners = 0.1pt,fill opacity=0.2] (A) -- (B) -- (C) -- (G) -- (F) -- (B)  (A)--(E)--(F)--(G)--(H) --(E);
}
}
\draw[xshift=12.8cm,yshift=-0.3cm]
node[below]
{Tubes };
\end{tikzpicture}
\hspace{0.2cm}

\begin{tikzpicture}[scale=0.52]
\foreach \y in  {0,0.4,0.8,1.2,1.6,2,2.4} {
\path (0,\y,3) coordinate (A)     (3,\y,3) coordinate (B)
           (3,\y,0) coordinate (C)    (0,\y,0) coordinate (D)
           (0,\y+0.1,3) coordinate (E)     (3,\y+0.1,3) coordinate (F)
           (3,\y+0.1,0) coordinate (G)      (0,\y+0.1,0) coordinate (H);
\draw [fill=black,rounded corners = 0.1pt,fill opacity=0.3] (A) -- (B) -- (C) -- (G) -- (F) -- (B)  (A)--(E)--(F)--(G)--(H) --(E);
}
\draw[xshift=0.8cm,yshift=-1.3cm]
node[below]
{Horizontal };

\foreach \x in {0,0.4,0.8,1.2,1.6,2,2.4} {
\path (\x+5.5,0,3) coordinate (A)     (\x+0.1+5.5,0,3) coordinate (B)
           (\x+0.1+5.5,0,0) coordinate (C)    (\x+5.5,0,0) coordinate (D)
           (\x+5.5,3,3) coordinate (E)     (\x+0.1+5.5,3,3) coordinate (F)
           (\x+0.1+5.5,3,0) coordinate (G)      (\x+5.5,3,0) coordinate (H);
\draw [fill=black,rounded corners = 0.1pt,fill opacity=0.3] (A) -- (B) -- (C) -- (G) -- (F) -- (B)  (A)--(E)--(F)--(G)--(H) --(E);
}
\draw[xshift=6.3cm,yshift=-1.3cm]
node[below]
{Lateral };

\foreach \z in {0,0.4,0.8,1.2,1.6,2,2.4} {
\path (11,0,\z) coordinate (A)     (3+11,0,\z) coordinate (B)
           (3+11,0,\z-0.1) coordinate (C)    (0+11,0,\z-0.1) coordinate (D)
           (0+11,3,\z) coordinate (E)     (3+11,3,\z) coordinate (F)
           (3+11,3,\z-0.1) coordinate (G)      (0+11,3,\z-0.1) coordinate (H);
\draw [fill=black,rounded corners = 0.1pt,fill opacity=0.5] (A) -- (B) -- (C) -- (G) -- (F) -- (B)  (A)--(E)--(F)--(G)--(H) --(E);
}
\draw[xshift=11.8cm,yshift=-1.3cm]
node[below]
{Frontal};

\end{tikzpicture}
\hspace{0.2cm}
\caption{\textbf{Fibers and slices of a third-order tensor.} The figure is adapted from \cite{doi:10.1137/07070111X}.}
\label{fig1}
\end{figure}

There are several notions of tensor products. The \textit{inner product} of two tensors $\textsf{X},\textsf{Y}\in \mathbb{R}^{n_1\times n_2\times \dots \times n_k}$ is defined as
\begin{equation}
\langle \textsf{X},\textsf{Y}\rangle =\sum_{j_1=1}^{n_1}\dots \sum_{j_k=1}^{n_k}\textsf{X}_{j_1j_2\dots j_k}\textsf{Y}_{j_1j_2\dots j_k},
\end{equation}
leading to the \textit{tensor Frobenius norm} $\|\textsf{X}\|^2=\langle \textsf{X},\textsf{X}\rangle$. We say two tensors \textsf{X} and \textsf{Y} are \textit{orthogonal} if the inner product $\langle \textsf{X},\textsf{Y}\rangle=0$. The \textit{matrix tensor multiplication} $\textsf{X} \times_{p} \textbf{A}$ along mode $p$ for a matrix $\textbf{A}\in  \mathbb{R}^{m\times n_p}$ is defined by
\begin{equation}
(\textsf{X} \times_{p} \textbf{A})_{j_1j_2\dots j_{p-1}ij_{p+1}\dots j_k}=\sum_{j_p=1}^{n_p}\textsf{X}_{j_1j_2\dots j_p\dots j_k}\textbf{A}_{ij_p}.
\end{equation}
This product can be generalized to what is known as the \textit{Tucker product}, for $\textbf{A}_p\in \mathbb{R}^{m_p\times n_p}$,
\begin{equation}\label{eq5}
\begin{split}
&\textsf{X}\times_1 \textbf{A}_1 \times_2\textbf{A}_2\times_3\dots \times_{k}\textbf{A}_k\\=&\textsf{X}\times\{\textbf{A}_1,\textbf{A}_2,\dots,\textbf{A}_k\}\in  \mathbb{R}^{m_1\times m_2\times\dots \times m_k}.
\end{split}
\end{equation}

\textit{Tensor unfolding} is considered as a critical operation in tensor computations \cite{doi:10.1137/110820609,chen_2019}. The \textit{$p$-mode unfolding} of a tensor $\textsf{X}\in \mathbb{R}^{n_1\times n_2\times  \dots \times n_k}$, denoted by $\textbf{X}_{(p)}$, is defined by 
\begin{equation}
\textsf{X}_{j_1j_2\dots j_k} = (\textbf{X}_{(p)})_{j_pj} \text{ for } j=1+\sum_{\substack{m=1 \\ m\neq p}}^k(j_m-1)\prod_{\substack{i=1 \\ i\neq p}}^{m-1}n_i.
\end{equation}
The ranks of the $p$-mode unfoldings are called \textit{multilinear ranks} of \textsf{X}, which are related to the so-called Higher-Order Singular Value Decomposition (HOSVD), a multilinear generalization of the matrix Singular Value Decomposition (SVD) \cite{5447070, doi:10.1137/S0895479896305696}.

\begin{theorem}[HOSVD] \label{thm:2.1}
A tensor $\textsf{X}\in\mathbb{R}^{n_1\times n_2\times \dots \times n_k}$ can be written as
\begin{equation}
\textsf{X} = \textsf{S}\times_1 \textbf{U}_1\times_2\dots \times_k \textbf{U}_k,
\end{equation}
where, $\textbf{U}_p\in\mathbb{R}^{n_p\times n_p}$ are orthogonal matrices, and $\textsf{S}\in\mathbb{R}^{n_1\times n_2\times \dots \times n_k}$ is a tensor of which the subtensors $\textsf{S}_{j_p=\alpha}$, obtained by fixing the $p$-th index to $\alpha$, have the properties of
\begin{enumerate}
\item all-orthogonality: two subtensors $\textsf{S}_{j_p=\alpha}$ and $\textsf{S}_{j_p=\beta}$ are orthogonal for all possible values of $p$, $\alpha$ and $\beta$ subject to $\alpha\neq \beta$;
\item ordering: $\|\textsf{S}_{j_p=1}\|\geq \dots \geq \|\textsf{S}_{j_p=n_p}\|\geq 0$ for all possible values of $p$.
\end{enumerate}
The Frobenius norms $\|\textsf{S}_{j_p=j}\|$, denoted by $\gamma_{j}^{(p)}$, are the $p$-mode singular values of $\textsf{X}$.
\end{theorem}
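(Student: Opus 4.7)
The plan is to prove the HOSVD constructively via matrix SVDs of the $p$-mode unfoldings. For each mode $p=1,\dots,k$, I would compute the SVD $\textbf{X}_{(p)} = \textbf{U}_p \boldsymbol{\Sigma}_p \textbf{V}_p^\top$ and take the orthogonal matrix of left singular vectors as the $p$-th factor. The core tensor is then defined by
\begin{equation}
\textsf{S} = \textsf{X} \times_1 \textbf{U}_1^\top \times_2 \textbf{U}_2^\top \times_3 \cdots \times_k \textbf{U}_k^\top,
\end{equation}
so that reapplying the $\textbf{U}_p$ via (\ref{eq5}) recovers $\textsf{X}$ immediately from $\textbf{U}_p\textbf{U}_p^\top = \textbf{I}$ together with the commutativity of matrix tensor multiplications along distinct modes. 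This gives the asserted decomposition; it remains to verify the two properties of $\textsf{S}$.

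The central step is to unfold the core along mode $p$. A standard identity relating $p$-mode unfolding to Tucker products yields
\begin{equation}
\textbf{S}_{(p)} = \textbf{U}_p^\top\, \textbf{X}_{(p)}\,\bigl(\textbf{U}_k \otimes \cdots \otimes \textbf{U}_{p+1} \otimes \textbf{U}_{p-1} \otimes \cdots \otimes \textbf{U}_1\bigr).
\end{equation}
Substituting the SVD of $\textbf{X}_{(p)}$ and using $\textbf{U}_p^\top \textbf{U}_p = \textbf{I}$ gives $\textbf{S}_{(p)} = \boldsymbol{\Sigma}_p \textbf{V}_p^\top \textbf{Q}_p$, where $\textbf{Q}_p$ is the Kronecker product of the remaining $\textbf{U}_i$. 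Since a Kronecker product of orthogonal matrices is orthogonal, the rows of $\textbf{S}_{(p)}$ are mutually orthogonal, and the $j$-th row has Euclidean norm equal to the $j$-th singular value $\sigma_j^{(p)}$ of $\textbf{X}_{(p)}$.

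Both remaining properties now fall out. By the definition of $p$-mode unfolding, the $j$-th row of $\textbf{S}_{(p)}$ is precisely the vectorization of the subtensor $\textsf{S}_{j_p=j}$. Thus orthogonality of rows translates directly into the all-orthogonality condition $\langle \textsf{S}_{j_p=\alpha}, \textsf{S}_{j_p=\beta}\rangle = 0$ for $\alpha \neq \beta$, and $\|\textsf{S}_{j_p=j}\| = \sigma_j^{(p)}$ gives both the nonincreasing ordering $\|\textsf{S}_{j_p=1}\| \geq \cdots \geq \|\textsf{S}_{j_p=n_p}\| \geq 0$ and the identification $\gamma_j^{(p)} = \sigma_j^{(p)}$ of the $p$-mode singular values.

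The only real obstacle is bookkeeping: one must write down the $p$-mode unfolding of a Tucker product correctly and be careful about the order in which the remaining factors appear in the Kronecker product $\textbf{Q}_p$, since this order depends on the unfolding convention used. Once that identity is in hand, every subsequent claim reduces to the orthogonality of SVD factors and of Kronecker products of orthogonal matrices, so no deeper argument is required.
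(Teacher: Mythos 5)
Your construction is correct and is precisely the classical argument of De Lathauwer, De Moor and Vandewalle that this paper relies on by citation rather than reproving: take $\textbf{U}_p$ from the SVD of each unfolding $\textbf{X}_{(p)}$, form the core by multiplying with the transposes, and read off all-orthogonality and the ordering from $\textbf{S}_{(p)}\textbf{S}_{(p)}^\top = \boldsymbol{\Sigma}_p\boldsymbol{\Sigma}_p^\top$. In particular your identification $\gamma_j^{(p)} = \sigma_j^{(p)}$ is exactly the fact the paper invokes immediately after the theorem statement, so nothing further is needed.
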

De Lathauwer et al. \cite{doi:10.1137/S0895479896305696} showed that the $p$-mode singular values from the HOSVD of \textsf{X} are the singular values of the $p$-mode unfoldings $\textbf{X}_{(p)}$. In section \ref{sec:2.4}, we will use the notion of $p$-mode singular values as the main tool to define the tensor entropy for uniform hypergraphs. 

The Tensor Train Decomposition (TTD) of an $k$-th order tensor $\textsf{X}\in\mathbb{R}^{n_1\times n_2\times \dots \times n_k}$ is given by
\begin{equation}\label{train}
\textsf{X} = \sum_{r_{k}=1}^{R_{k}}\dots\sum_{r_0=1}^{R_0}\textsf{X}^{(1)}_{r_0:r_1} \circ \textsf{X}^{(2)}_{r_1:r_2}\circ \dots \circ \textsf{X}^{(k)}_{r_{k-1}:r_k},
\end{equation}
where, $\circ$ is the \textit{vector outer product}, $\{R_0,R_1,\dots,R_k\}$ is the set of \textit{TT-ranks} with $R_0=R_k=1$, and $\textsf{X}^{(p)}\in\mathbb{R}^{R_{p-1}\times n_p\times R_{p}}$ are called the \textit{core tensors} of the TTD.  There exist optimal TT-ranks for the TTD such that
\begin{equation*}
R_p = \texttt{rank}(\texttt{reshape}(\textsf{X}, \prod_{j=1}^pn_j,\prod_{j=p+1}^k n_j)),
\end{equation*}
for $p=1,2,\dots,k-1$ \cite{doi:10.1137/090752286}. A core tensor $\textsf{X}^{(p)}$ is called \textit{left-orthonormal} if $(\bar{\textbf{X}}^{(p)})^\top\bar{\textbf{X}}^{(p)} = \textbf{I}\in\mathbb{R}^{R_p\times R_p}$, and is called \textit{right-orthonormal} if $\underline{\textbf{X}}^{(p)}(\underline{\textbf{X}}^{(p)})^\top = \textbf{I}\in\mathbb{R}^{R_{p-1}\times R_{p-1}}$ where
\begin{equation*}
\begin{split}
\bar{\textbf{X}}^{(p)} &= \texttt{reshape}(\textsf{X}^{(p)},R_{p-1}n_p,R_p),\\
\underline{\textbf{X}}^{(p)} &= \texttt{reshape}(\textsf{X}^{(p)},R_{p-1},n_pR_p),
\end{split}
\end{equation*}
are the \textit{left-} and \textit{right-unfoldings} of the core tensor, respectively. Here \textbf{I} denotes the identity matrix, and \texttt{rank} and \texttt{reshape} refer to the rank and reshape operations in MATLAB, respectively. Basic tensor algebra, such as addition, tensor products and norms, can be done in the TT-format without requiring to recover back to the full tensor representation \cite{doi:10.1137/090752286}.

\subsection{Uniform hypergraphs}\label{sec:2.2}
We first present some fundamental concepts of hypergraphs \cite{qi2013,HU20151,XIE20132195}. A \textit{hypergraph} \textsf{G} = \{\textbf{V}, \textbf{E}\} where $\textbf{V} = \{1,2,\dots,n\}$ is the vertex set and $\textbf{E} = \{e_1,e_2,\dots,e_m\}$ is the \textit{hyperedge} set with $e_p\subseteq \textbf{V}$ for $p=1,2,\dots,m$ . Two vertices are called \textit{adjacent} if they are in the same hyperedge. A hypergraph is called \textit{connected} if given two vertices, there is a path connecting them through hyperedges. If all hyperedges contain the same number of nodes, i.e., $|e_p| = k$ ($|\cdot|$ denotes the  cardinality of a set), \textsf{G} is called a \textit{$k$-uniform hypergraph}, see Figure \ref{fig:0}A. Significantly, every $k$-uniform hypergraph can be represented by a tensor. 
\begin{definition}
Let \textsf{G} = \{\textbf{V}, \textbf{E}\} be a $k$-uniform hypergraph with $n$ vertices. The adjacency tensor $\textsf{A}\in\mathbb{R}^{n\times n\times \dots\times n}$, which is a $k$-th order $n$-dimensional supersymmetric tensor, is defined as
\begin{equation}
\textsf{A}_{j_1j_2\dots j_k} = \begin{cases} \frac{1}{(k-1)!} \text{ if $(j_1,j_2,\dots,j_k)\in \textbf{E}$}\\ \\0, \text{ otherwise}\end{cases}.
\end{equation}
\end{definition}
The \textit{degree tensor} \textsf{D} of a hypergraph \textsf{G}, associated with \textsf{A}, is a $k$-th order $n$-dimensional diagonal tensor with $\textsf{D}_{jj\dots j}$ equal to the number of hyperedges that consist of  $v_j$ for $j=1,2,\dots,n$. If $\textsf{D}_{jj\dots j} = d$ for all $j$, then \textsf{G} is called \textit{$d$-regular}. Given any $k$ vertices, if they are contained in one hyperedge, then \textsf{G} is called \textit{complete}. 
\begin{definition}
Let \textsf{G} = \{\textbf{V}, \textbf{E}\} be a $k$-uniform hypergraph with $n$ vertices. The Laplacian tensor $\textsf{L}\in\mathbb{R}^{n\times n\times \dots \times n}$ of \textsf{G}, which is a $k$-th order $n$-dimensional supersymmetric tensor, is defined as
\begin{equation}
\textsf{L} = \textsf{D} - \textsf{A},
\end{equation}
where, \textsf{D} and \textsf{A} are the degree and adjacency tensors of \textsf{G}, respectively.
\end{definition}

The Laplacian tensors of uniform hypergraphs possess many similar properties as Laplacian matrices. For example, the smallest \textit{H-eigenvalue} of \textsf{L} is always zero corresponding to the all-one \textit{H-eigenvector} \cite{qi2013}. Moreover, Chen et al. \cite{doi:10.1137/16M1094828} showed that the \textit{Z-eigenvector} associated with the second smallest \textit{Z-eigenvalue} of a normalized Laplacian tensor can be used for hypergraph partition. Detailed descriptions of tensor eigenvalues can be found in Appendix B. 
In the following, we extend several graph-based definitions to describe the  structural properties of uniform hypergraphs.
\begin{definition}
Given a hypergraph \textsf{G}, the index of dispersion of the vertex degree distribution of \textsf{G} is defined to be the ratio of its variance to its mean. 
\end{definition}
\begin{definition}
Given a $k$-uniform hypergraph \textsf{G} with $n$ vertices, the average path length of \textsf{G} is defined by 
\begin{equation}
L_{\text{avg}} = \frac{1}{n(n-1)}\sum_{j\neq i}d(v_j,v_i),
\end{equation}
where, $d(v_j,v_i)$ denotes the shortest distance between $v_j$ and $v_i$. 
\end{definition}
\begin{definition}
Given a $k$-uniform hypergraph \textsf{G} with $n$ vertices, the average clustering coefficient of \textsf{G} is defined by
\begin{equation}
\begin{split}
C_j &= \frac{|\{e_{ilk}: v_i,v_l,v_k\in \textbf{N}_j, e_{ilk}\in\textbf{E}\}|}{\binom {|\textbf{N}_j|}{k}},\\& \Rightarrow C_{\text{avg}} =\frac{1}{n}\sum_{j=1}^nC_j, 
\end{split}
\end{equation}
where, $\textbf{N}_j$ is the set of vertices that are immediately connected to $v_j$, and $\binom {|\textbf{N}_j|}{k} = \frac{|\textbf{N}_j|!}{(|\textbf{N}_j|-k)!k!}$ returns the binomial coefficients. If $|\textbf{N}_j|<k$, we set $C_j=0$.
\end{definition}
\begin{definition}
Given a $k$-uniform hypergraph \textsf{G}, the small world coefficient of \textsf{G} is defined by 
\begin{equation}
\sigma = \frac{C_{\text{avg}}/C_{\text{rand}}}{L_{\text{avg}}/L_{\text{rand}}},
\end{equation} 
where, $C_{\text{avg}}$ and $L_{\text{avg}}$ are the average clustering coefficient and path length of \textsf{G}, respectively, and $C_{\text{rand}}$ and $L_{\text{rand}}$ are the same quantities of its equivalent random uniform hypergraph.
\end{definition}
The equivalent random uniform hypergraphs of \textsf{G} can be constructed analogously as Erd\H{o}s-R\'enyi graphs \cite{10.1371/journal.pone.0002051}, i.e., randomly generating uniform hypergraphs that share the same numbers of vertices and hyperedges with \textsf{G}. All these definitions can be used for quantifying the performance of entropy measures for uniform hypergraphs.

\subsection{Tensor entropy}\label{sec:2.4}
Similar to von Neumann entropy, we exploit the spectrum of Laplacian tensors to define the notion of \textit{tensor entropy} for uniform hypergraphs. 

\begin{definition}
Let \textsf{G} be a $k$-uniform hypergraph with $n$ vertices. The tensor entropy of \textsf{G} is defined by
\begin{equation}\label{eq:3}
\textsc{S} = -\sum_{j=1}^{n} \hat{\gamma}_j\ln{\hat{\gamma}_j},
\end{equation}
where, $\hat{\gamma}_j$ are the normalized $k$-mode singular values of \textsf{L} such that $\sum_{j=1}^n\hat{\gamma}_j = 1$.
\end{definition}

The convention $0\ln{0}=0$ is used if $\hat{\gamma}_j=0$. The $k$-mode singular values of \textsf{L} can be computed from the matrix SVD of the $k$-mode unfolding $\textbf{L}_{(k)}$, which results in a $\mathscr{O}(n^{k+1})$ time complexity and a $\mathscr{O}(n^{k})$ space complexity, see Algorithm \ref{alg:1}. Since \textsf{L} is supersymmetric, any mode unfolding of \textsf{L} would yield the same unfolding matrix with the same singular values.  Moreover, the tensor entropy (\ref{eq:3}) can be viewed as a variation of von Neumann entropy defined for graphs, in which we regard $c\textbf{L}_{(k)}\textbf{L}_{(k)}^\top$ as the density matrix for some normalization constant $c$ \cite{Samuel2004,10.1093/comnet/cny028,Passerini2008TheVN}. In particular, when $k=2$, the tensor entropy is reduced to the classical von Neumann entropy for graphs. Like the eigenvalues of Laplacian matrices, the $k$-mode singular values play a significant role in identifying the structural patterns for uniform hypergraphs.

\begin{algorithm}[h]
\caption{Computing tensor entropy from SVD}
\label{alg:1}
\begin{algorithmic}[1]
\STATE{Given a $k$-uniform hypergraph \textsf{G} with $n$ vertices}\\
\STATE{Construct the adjacency tensor $\textsf{A}\in\mathbb{R}^{n\times n\times \dots \times n}$ from \\ \textsf{G} and compute the Laplacian tensor $\textsf{L} = \textsf{D} - \textsf{A}$ \\ where \textsf{D} is the degree tensor}\\
\STATE{Find the $k$-mode unfolding of \textsf{L}, i.e., $\textbf{L}_{(k)} = \texttt{reshape}(\textsf{L}, n, n^{k-1})$}\\
\STATE{Compute the economy-size matrix SVD of $\textbf{L}_{(k)}$, i.e., $\textbf{L}_{(k)}=\textbf{U}\textbf{S}\textbf{V}^\top$ and let $\{\gamma_j\}_{j=1}^{n}=\texttt{diag}(\textbf{S})$}\\
\STATE{Set $\hat{\gamma}_j=\frac{\gamma_j}{\sum_{i=1}^n\gamma_i}$ and compute $\textsc{S}  = -\sum_{j=1}^n\hat{\gamma}_j\ln{\hat{\gamma}_j}$}
\RETURN The tensor entropy \textsc{S} of \textsf{G}.
\end{algorithmic}
\end{algorithm}

\begin{lemma}\label{lem:1}
Suppose that \textsf{G} is a $k$-uniform hypergraph with $k\geq 3$. Then \textsf{L} has a $k$-mode singular value zero, with multiplicity $p$, if and only if \textsf{G} contains $p$ number of non-connected vertices.  
\end{lemma}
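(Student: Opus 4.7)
The plan is to reduce the claim about $k$-mode singular values to a claim about the rank of the $k$-mode unfolding $\textbf{L}_{(k)}$. Since $\textsf{L}$ is supersymmetric, by Theorem~\ref{thm:2.1} and the result of De Lathauwer et al.\ the $k$-mode singular values of $\textsf{L}$ are precisely the singular values of the $n \times n^{k-1}$ matrix $\textbf{L}_{(k)}$, of which there are at most $n$. Hence the multiplicity of zero as a $k$-mode singular value equals $n - \textup{rank}(\textbf{L}_{(k)})$, and it suffices to show that $\textup{rank}(\textbf{L}_{(k)})$ equals the number of vertices of $\textsf{G}$ lying in at least one hyperedge.

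First I would establish the upper bound. Recalling that $\textsf{L}_{j_1 \ldots j_k}$ equals $d_j$ when $j_1 = \cdots = j_k = j$, equals $-1/(k-1)!$ when $\{j_1,\ldots,j_k\}$ is a hyperedge (so in particular all indices are distinct), and vanishes otherwise, I observe that if $v_j$ is isolated then its entire slice $\textsf{L}_{j: \ldots :}$ is zero. Therefore, writing $q$ for the number of isolated vertices, the corresponding $q$ rows of $\textbf{L}_{(k)}$ vanish and $\textup{rank}(\textbf{L}_{(k)}) \leq n - q$.

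For the matching lower bound I would exhibit, for each non-isolated vertex $v_j$, a column of $\textbf{L}_{(k)}$ equal to $d_j \textbf{e}_j$. The candidate is the column indexed by the multi-index $(j, j, \ldots, j)$ with $k-1$ copies of $j$; its entry in row $j'$ is $\textsf{L}_{j, j, \ldots, j, j'}$. For $j' = j$ this is $d_j$, and for $j' \neq j$ the tuple has $j$ repeated $k-1 \geq 2$ times, so the indices are not all distinct and the entry is $0$. Since $v_j$ is non-isolated, $d_j > 0$, so $\textbf{e}_j$ lies in the column space of $\textbf{L}_{(k)}$. The $n - q$ vectors $\textbf{e}_j$ arising this way are linearly independent, giving $\textup{rank}(\textbf{L}_{(k)}) \geq n - q$, and combining the two bounds yields $\textup{rank}(\textbf{L}_{(k)}) = n - q$, which is exactly the desired equivalence in both directions.

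The only delicate point, rather than a true obstacle, is the role of the hypothesis $k \geq 3$: the diagonal-column argument requires $k - 1 \geq 2$ to force a repeated index and thereby kill the off-diagonal entries of that distinguished column. For $k = 2$ the analogous column $(j)$ picks up the $-1$ entries at neighbors of $v_j$, and indeed the statement itself fails, since zero eigenvalues of a graph Laplacian correspond to connected components rather than to isolated vertices. I would mention this at the end as a remark to explain why the uniform hypergraph regime $k \geq 3$ genuinely departs from the classical graph case.
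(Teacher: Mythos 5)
Your proposal is correct, and it takes the same (indeed the only natural) route as the paper, whose entire proof is the sentence ``the result follows immediately from the definitions of Laplacian tensor and $k$-mode unfolding of \textsf{L}.'' What you have written is exactly the verification that claim demands --- the zero rows of $\textbf{L}_{(k)}$ for isolated vertices, the diagonal-fiber columns equal to $d_j\textbf{e}_j$ for non-isolated ones, and the observation that $k-1\geq 2$ is what kills the off-diagonal entries of those columns --- so it fills in the details the paper omits rather than departing from its approach.
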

\begin{proof}
The result follows immediately from the definitions of Laplacian tensor and $k$-mode unfolding of \textsf{L}. 
\end{proof}

The multiplicity of the zero $k$-mode singular value can be used to determine the number of connected components of uniform hypergraphs.  Moreover, one can derive the lower bound of tensor entropy based on Lemma \ref{lem:1}.

\begin{proposition}\label{pro:1}
Suppose that \textsf{G} is a $k$-uniform hypergraph with $n$ vertices and nonempty hyperedge set \textbf{E} for $k\geq 3$. Then the minimum tensor entropy of \textsf{G} is given by 
\begin{equation}
\textsc{S}_{\min} = \ln{k}.
\end{equation}
\end{proposition}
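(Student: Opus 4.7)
My plan is to show $\textsc{S}_{\min} \le \ln k$ by constructing a hypergraph that attains $\textsc{S} = \ln k$, and then to show $\textsc{S}(\textsf{G}) \ge \ln k$ for every non-empty $k$-uniform hypergraph $\textsf{G}$.

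For the attainment, I would let $\textsf{G}_0$ consist of a single hyperedge $e = \{v_1, \ldots, v_k\}$ with the remaining $n-k$ vertices isolated. The row of $\textbf{L}_{(k)}$ indexed by $v_i$ has entry $1$ at column $(v_i, v_i, \ldots, v_i)$ and entry $-1/(k-1)!$ at each of the $(k-1)!$ columns whose tuple is a permutation of $e \setminus \{v_i\}$, while all other rows are zero. The crucial point --- using $k \ge 3$ --- is that for $i \ne i'$ the column supports of rows $v_i$ and $v_{i'}$ are disjoint: their diagonal columns differ; their permutation columns use values in $\{v_1, \ldots, v_k\} \setminus \{v_i\}$ versus $\{v_1, \ldots, v_k\} \setminus \{v_{i'}\}$, which are distinct sets; and a diagonal tuple $(v_i, \ldots, v_i)$, having $k-1 \ge 2$ equal entries, cannot appear as a permutation of the distinct tuple $e \setminus \{v_{i'}\}$. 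Consequently the $k$ nonzero rows are pairwise orthogonal with common squared norm $1 + 1/(k-1)!$, so $\textbf{L}_{(k)}$ admits $k$ equal nonzero $k$-mode singular values. After normalization $\hat{\gamma}_j = 1/k$ for $k$ indices, hence $\textsc{S}(\textsf{G}_0) = -k \cdot (1/k) \ln(1/k) = \ln k$.

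For the lower bound, I would first establish $\text{rank}(\textbf{L}_{(k)}) \ge k$ whenever $\textbf{E} \ne \emptyset$. Picking any $e = \{v_1, \ldots, v_k\} \in \textbf{E}$, the $k \times k$ principal submatrix of $\textbf{L}_{(k)}$ on rows $v_1, \ldots, v_k$ and the diagonal columns $(v_1, \ldots, v_1), \ldots, (v_k, \ldots, v_k)$ is diagonal with entries $d_{v_i} \ge 1$: the degree tensor is diagonal, and for $i \ne j$ the tuple $(v_j, \ldots, v_j, v_i)$ has $v_j$ repeated $k-1 \ge 2$ times, so the adjacency tensor contributes $0$. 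The main obstacle is upgrading this rank bound to the Shannon entropy bound $\textsc{S} \ge \ln k$, since entropy on a support of size $\ge k$ can in principle be made arbitrarily small under concentration. My intended route is the nuclear-to-Frobenius inequality $\bigl(\sum_j \gamma_j\bigr)^2 \ge k \sum_j \gamma_j^2$ for the $k$-mode unfolding: combined with the Jensen-type bound $\textsc{S} \ge -\ln \sum_j \hat{\gamma}_j^2$ (which follows from convexity of $-\ln$) and the identity $\sum_j \hat{\gamma}_j^2 = \sum_j \gamma_j^2 / (\sum_j \gamma_j)^2$, this at once yields $\textsc{S} \ge \ln k$, with equality exactly in the single-hyperedge case. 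I would attack this norm inequality via the additive decomposition $\textbf{L}_{(k)} = \sum_{e \in \textbf{E}} (\textbf{L}_e)_{(k)}$ and a combinatorial accounting of cross-terms $\langle \textsf{L}_e, \textsf{L}_{e'} \rangle = |e \cap e'|$; this is the technical heart of the argument.
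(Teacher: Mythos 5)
Your attainment half is correct and is essentially a rigorous expansion of what the paper does: the paper invokes Lemma~1 to say that at most $n-k$ of the $k$-mode singular values can vanish and that in the extremal configuration the remaining $k$ normalized singular values are all $\frac{1}{k}$; your disjoint-support argument for the $k$ nonzero rows of $\textbf{L}_{(k)}$ (valid precisely because $k-1\geq 2$) is the right way to justify that, and it produces the same minimizer the paper has in mind, a single hyperedge plus isolated vertices.

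The gap is in the lower bound, which you correctly identify as the real content but do not prove. The paper itself never argues that \emph{every} nonempty $k$-uniform hypergraph has entropy at least $\ln k$ --- it only exhibits the extremal configuration --- so there is nothing there for you to lean on; and your route hinges on the inequality $\bigl(\sum_j\gamma_j\bigr)^2\geq k\sum_j\gamma_j^2$, which you label ``the technical heart'' and leave as a plan. That inequality does not follow from $\operatorname{rank}(\textbf{L}_{(k)})\geq k$ (Cauchy--Schwarz gives the reverse direction, $\bigl(\sum_j\gamma_j\bigr)^2\leq r\sum_j\gamma_j^2$), and the natural shortcut $\gamma_{\max}^2\leq\frac{1}{k}\sum_j\gamma_j^2$, which would imply it via $\sum_j\gamma_j\geq\sum_j\gamma_j^2/\gamma_{\max}$, already fails for two hyperedges sharing $k-1$ vertices: for $k=3$ with $e_1=\{1,2,3\}$, $e_2=\{1,2,4\}$ one gets $\textbf{L}_{(3)}\textbf{L}_{(3)}^\top$ with eigenvalues $5,5,2,1$, so $\gamma_{\max}^2=5>\tfrac{13}{3}$, even though your target inequality ($47.4\geq 39$) and the entropy bound ($\textsc{S}\approx 1.336>\ln 3$) still hold. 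So the deferred cross-term accounting for $\sum_{e\neq e'}|e\cap e'|$ against the nuclear norm of $\sum_e(\textbf{L}_e)_{(k)}$ is genuinely where the proof must happen (the triangle inequality for the nuclear norm also points the wrong way for a sum decomposition), and as written the proposal does not contain that argument. Everything else --- the row-norm computation $1+1/(k-1)!$, the cross-term identity $\langle\textsf{L}_e,\textsf{L}_{e'}\rangle=|e\cap e'|$, and the Shannon-versus-collision-entropy step $\textsc{S}\geq-\ln\sum_j\hat{\gamma}_j^2$ --- checks out.
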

\begin{proof}
Since \textsf{G} is a $k$-uniform hypergraph on $n$ vertices, the maximum multiplicity of the zero normalized $k$-mode singular value of $\textsf{L}$ is $n-k$ according to Lemma \ref{lem:1}. In addition, the other normalized $k$-mode singular values of  are necessarily $\frac{1}{k}$. Hence,  it is straightforward to show that $\textsc{S} _{\min} = \ln{k}$.
\end{proof}
Every $k$-uniform hypergraph can achieve the minimum tensor entropy $\ln{k}$. As the number of vertices contained in hyperedges increases, the lower limit of tensor entropy also increases. In the following, we present results about the upper limit of tensor entropy and its relation to regular uniform hypergraphs.

\begin{proposition}\label{pro:2}
Suppose that \textsf{G} is a $k$-uniform hypergraph with $n$ vertices for $k\geq 3$. Then the maximum tensor entropy of \textsf{G} occurs when it is a 1-regular uniform hypergraph, and is given by
\begin{equation}
\textsc{S} _{\max} = \ln{n}.
\end{equation}
\end{proposition}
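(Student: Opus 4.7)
My plan is to split the proof into two independent pieces: a universal upper bound $\textsc{S} \leq \ln n$, and a demonstration that $1$-regular $k$-uniform hypergraphs attain this bound. The upper bound is immediate from the concavity of $-x\ln x$ (equivalently, Jensen's inequality applied to the discrete distribution $\{\hat{\gamma}_j\}_{j=1}^n$ summing to one), yielding $\textsc{S} \leq \ln n$ with equality precisely when $\hat{\gamma}_j = 1/n$ for every $j$. This step is standard and I would dispatch it in a single line.

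The substance lies in the realization: I would show that when $\textsf{G}$ is $1$-regular, every $k$-mode singular value of $\textsf{L}$ coincides. Using the identification (De Lathauwer et al.) of the $k$-mode singular values of $\textsf{L}$ with the ordinary singular values of its $k$-mode unfolding $\textbf{L}_{(k)}$, it suffices to prove that $\textbf{L}_{(k)}\textbf{L}_{(k)}^\top$ is a positive scalar multiple of the identity. I would compute the $(i,j)$ entry of this $n\times n$ matrix from the expansion $\sum_{j_2,\dots,j_k}\textsf{L}_{ij_2\dots j_k}\textsf{L}_{jj_2\dots j_k}$, classifying the nonzero contributions by whether they originate from the diagonal tensor $\textsf{D}$ or from a hyperedge-permutation entry of $\textsf{A}$. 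The diagonal entry $(i,i)$ then picks up a unit contribution from $\textsf{D}$ together with $(k-1)!$ copies of $(1/(k-1)!)^2$ from the hyperedge containing $i$, giving a value independent of $i$.

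The crux, and the step I expect to be the main obstacle, is verifying that the off-diagonal entries vanish. In a $1$-regular hypergraph, each vertex belongs to a unique hyperedge, so the support of the $i$-th row of $\textbf{L}_{(k)}$ consists of the diagonal pattern $(i,i,\dots,i)$ together with the $(k-1)!$ tuples obtained by permuting the partners of $i$ within its hyperedge. Showing that the $i$-th and $j$-th row supports are disjoint requires a short case analysis: when $i$ and $j$ share a hyperedge $e$, the multisets $e\setminus\{i\}$ and $e\setminus\{j\}$ differ, so no permutation of the one can equal a permutation of the other; when they lie in distinct hyperedges, $1$-regularity forces those hyperedges to be disjoint, again ruling out any coincidence. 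The hypothesis $k\geq 3$ is used here to exclude the degenerate situation in which a constant $(k-1)$-tuple could match a permutation of a multi-element set. Once the off-diagonal vanishing is established, $\textbf{L}_{(k)}\textbf{L}_{(k)}^\top = c\,\textbf{I}$ for some $c>0$, so all $k$-mode singular values of $\textsf{L}$ coincide, the normalized values equal $\hat{\gamma}_j = 1/n$, and the equality case of Jensen delivers $\textsc{S}_{\max}=\ln n$.
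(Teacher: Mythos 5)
Your proposal is correct and follows essentially the same route as the paper: bound $\textsc{S}\leq\ln n$ by the maximality of the uniform distribution, then show that for a $1$-regular hypergraph the rows of $\textbf{L}_{(k)}$ have pairwise disjoint supports and equal norms, so $\textbf{L}_{(k)}\textbf{L}_{(k)}^\top$ is a scalar multiple of the identity. You are in fact more careful than the paper on the two points it glosses over --- the explicit case analysis for why the row supports are disjoint and the role of $k\geq 3$ --- and you avoid the paper's slip of writing the adjacency entries as $-\tfrac{1}{2}$ rather than $-\tfrac{1}{(k-1)!}$.
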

\begin{proof}
Since \textsf{G} is a $k$-uniform hypergraph on $n$ vertices, the maximum tensor entropy occurs when the multiplicity of a normalized $k$-mode singular value of $\textsf{L}$ is $n$. Based on the definitions of Laplacian tensor and $k$-mode unfolding, for a $1$-regular uniform hypergraph, the number of nonzero elements in $\textbf{L}_{(k)}$ are fixed for $j$-th row with one entry
\begin{equation*}
(\textbf{L}_{(k)})_{j[1+\sum_{m=1}^{k-1}(j-1)n^{m-1}]} = 1 
\end{equation*} 
and $(k-1)!$ entries $-\frac{1}{2}$ for $j=1,2,\dots n$. Moreover, since all the hyperedges contain distinct vertices, the column indices of the nonzero entries are unique for $\textbf{L}_{(k)}$. Thus, $\textbf{L}_{(k)}\textbf{L}_{(k)}^\top$ is a diagonal matrix with equal diagonal elements, and the result follows immediately.
\end{proof}

\begin{proposition}\label{pro:5}
Suppose that \textsf{G} is a $k$-uniform hypergraph with $n$ vertices for $k\geq 3$. If  $\log_k{n}$ is an integer, then the maximum tensor entropy of \textsf{G} can be achieved when it is a $d$-regular uniform hypergraph for $1 \leq d \leq \log_k{n}$.
\end{proposition}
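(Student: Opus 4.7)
By Proposition \ref{pro:2}, the maximum entropy $\ln n$ is attained exactly when all $n$ normalized $k$-mode singular values of $\textsf{L}$ coincide, equivalently when $\textbf{L}_{(k)}\textbf{L}_{(k)}^\top$ is a positive multiple of $\textbf{I}$. My plan is first to translate this algebraic flatness into a combinatorial condition on the hyperedge set of a $d$-regular hypergraph, and then to exhibit, for every $1\leq d\leq \log_k n$, an explicit $d$-regular uniform hypergraph realizing it.

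For the translation step, I would expand $\textbf{L}_{(k)}\textbf{L}_{(k)}^\top$ using $\textsf{L}=\textsf{D}-\textsf{A}$. A short row-level inspection shows that in a $d$-regular hypergraph every row of $\textbf{L}_{(k)}$ has one entry equal to $d$ and $d(k-1)!$ entries equal to $-\tfrac{1}{(k-1)!}$, so every diagonal entry of the Gram matrix equals the common value $d^2+\tfrac{d}{(k-1)!}$ automatically. The $(i,j)$ off-diagonal entry, by the same reasoning, reduces to $\tfrac{c_{ij}}{(k-1)!}$, where $c_{ij}$ counts the $(k-1)$-subsets $S\subseteq\textbf{V}\setminus\{i,j\}$ such that both $S\cup\{i\}$ and $S\cup\{j\}$ lie in $\textbf{E}$. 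Thus the maximum tensor entropy is achieved if and only if $c_{ij}=0$ for all $i\neq j$, i.e., no two distinct hyperedges share exactly $k-1$ vertices. (Proposition \ref{pro:2} is the special case $d=1$, since in a $1$-regular hypergraph any shared vertex would already force degree at least $2$.)

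For the construction, I set $s=\log_k n$ and identify $\textbf{V}$ with $\{0,1,\dots,k-1\}^s$. For each coordinate axis $\ell\in\{1,\dots,s\}$, let $\pi_\ell$ denote the partition of $\textbf{V}$ into the $k^{s-1}$ axis-parallel lines obtained by fixing every coordinate except the $\ell$-th. Given $1\leq d\leq s$, I would select any $d$ distinct axes and take $\textbf{E}$ to be the union of the corresponding partitions; each vertex then lies in exactly one line per chosen axis, so the resulting hypergraph is $d$-regular. Two hyperedges inside the same $\pi_\ell$ are either identical or disjoint, while two hyperedges drawn from different axes share at most one vertex, because membership in both pins down all $s$ coordinates. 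Since $k\geq 3$ gives $k-1\geq 2$, the no-$(k-1)$-sharing condition is satisfied, so every $c_{ij}$ vanishes and the construction attains $\textsc{S}=\ln n$.

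The hard part will be the existence step: a priori one might suspect that building $d$-regular hypergraphs avoiding any pair of hyperedges with $(k-1)$-vertex overlap requires resolvable block designs or affine geometries. The key simplification is that whenever $\log_k n\in\mathbb{Z}$, the product structure $\{0,\dots,k-1\}^s$ supplies $s$ mutually compatible line-partitions whose union is automatically $d$-regular and pairwise-intersects in at most one vertex, short-circuiting any more delicate design-theoretic construction.
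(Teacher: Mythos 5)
Your proposal is correct, and its skeleton matches the paper's: both reduce the problem to showing that $\textbf{L}_{(k)}\textbf{L}_{(k)}^\top$ is a scalar multiple of the identity for a suitable $d$-regular hypergraph. The difference is in how much of that reduction is actually substantiated. The paper's proof asserts in one line that ``any two hyperedges of a $d$-regular uniform hypergraph contain at least $k-1$ distinct vertices for $1\leq d\leq \log_k n$'' and then appeals to the argument of Proposition~\ref{pro:2}; it neither explains why such a $d$-regular hypergraph exists nor notes that the property fails for generic $d$-regular hypergraphs (it is an existence claim, not a universal one). You supply both missing pieces: the explicit computation that the off-diagonal Gram entries equal $c_{ij}/(k-1)!$, with $c_{ij}$ counting common $(k-1)$-shells of $i$ and $j$ (so flatness is exactly ``no two hyperedges share $k-1$ vertices''), and the concrete realization via axis-parallel lines in $\{0,\dots,k-1\}^{s}$, whose parallel classes are each a perfect partition and whose cross-axis lines meet in at most one point. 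This construction is precisely what makes the hypothesis $\log_k n\in\mathbb{Z}$ enter the statement, a point the paper's proof leaves opaque. One small remark: your intersection condition ($\le 1$ shared vertex) is stronger than the condition you actually need (no two hyperedges sharing exactly $k-1$ vertices), which is why the bound $d\le\log_k n$ is only sufficient and not the combinatorial optimum; but since the proposition only claims achievability for $d\le\log_k n$, this costs you nothing.
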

\begin{proof}
Suppose that $\log_k{n}$ is an integer. Any two hyperedges of a $d$-regular uniform hypergraph contain at least $k-1$ distinct vertices for $1 \leq d \leq \log_k{n}$. Similar to Proposition \ref{pro:2}, it can be shown that $\textbf{L}_{(k)}\textbf{L}_{(k)}^\top$ is a diagonal matrix with equal diagonal elements. Therefore, the result follows immediately.
\end{proof}

According to Proposition \ref{pro:2} and \ref{pro:5}, not all uniform hypergraphs with $n$ vertices can achieve the tensor entropy $\ln{n}$. However, when $\log_k{n}$ is an integer, one can utilize tensor entropy to measure the regularity of uniform hypergraphs. Moreover, if \textsf{G} contains $p$ number of non-connected vertices and $\log_k{(n-p)}$ is an integer, then $\textsc{S}_{\max} = \ln{(n-p)}$. Therefore, larger tensor entropy can be obtained with more connected components in this case. Next, we establish results regarding complete uniform hypergraphs.

\begin{figure*}[hbt!]
\centering
\tcbox[colback=mygray]{
\begin{tikzpicture}
\node[vertex,text=white,scale=0.7] (v1) {1};
\node[vertex,below of=v1,text=white,scale=0.7] (v2) {2};
\node[vertex,below of=v2,text=white,scale=0.7] (v3) {3};
\node[vertex,right of=v3,text=white,scale=0.7] (v4) {4};
\node[vertex,right of=v4,text=white,scale=0.7] (v5) {5};
\node[vertex,above of=v5,text=white,scale=0.7] (v6) {6};
\node[vertex,above of=v6,text=white,scale=0.7] (v7) {7};

\node[vertex,right of=v7, text=white,scale=0.7,xshift=5mm] (v8) {1};
\node[vertex,right of=v6,text=white,scale=0.7,xshift=5mm] (v9) {2};
\node[vertex,right of=v5,text=white,scale=0.7,xshift=5mm] (v10) {3};
\node[vertex,right of=v10,text=white,scale=0.7] (v11) {4};
\node[vertex,right of=v11,text=white,scale=0.7] (v12) {5};
\node[vertex,above of=v12,text=white,scale=0.7] (v13) {6};
\node[vertex,above of=v13,text=white,scale=0.7] (v14) {7};

\node[vertex,right of=v14, text=white,scale=0.7,xshift=5mm] (v15) {1};
\node[vertex,right of=v13,text=white,scale=0.7,xshift=5mm] (v16) {2};
\node[vertex,right of=v12,text=white,scale=0.7,xshift=5mm] (v17) {3};
\node[vertex,right of=v17,text=white,scale=0.7] (v18) {4};
\node[vertex,right of=v18,text=white,scale=0.7] (v19) {5};
\node[vertex,above of=v19,text=white,scale=0.7] (v20) {6};
\node[vertex,above of=v20,text=white,scale=0.7] (v21) {7};

\node[vertex,right of=v21, text=white,scale=0.7,xshift=5mm] (v22) {1};
\node[vertex,right of=v20,text=white,scale=0.7,xshift=5mm] (v23) {2};
\node[vertex,right of=v19,text=white,scale=0.7,xshift=5mm] (v24) {3};
\node[vertex,right of=v24,text=white,scale=0.7] (v25) {4};
\node[vertex,right of=v25,text=white,scale=0.7] (v26) {5};
\node[vertex,above of=v26,text=white,scale=0.7] (v27) {6};
\node[vertex,above of=v27,text=white,scale=0.7] (v28) {7};

\node[vertex,right of=v28, text=white,scale=0.7,xshift=5mm] (v29) {1};
\node[vertex,right of=v27,text=white,scale=0.7,xshift=5mm] (v30) {2};
\node[vertex,right of=v26,text=white,scale=0.7,xshift=5mm] (v31) {3};
\node[vertex,right of=v31,text=white,scale=0.7] (v32) {4};
\node[vertex,right of=v32,text=white,scale=0.7] (v33) {5};
\node[vertex,above of=v33,text=white,scale=0.7] (v34) {6};
\node[vertex,above of=v34,text=white,scale=0.7] (v35) {7};

\node[vertex,below of=v3, text=white,scale=0.7] (w1) {1};
\node[vertex,below of=w1,text=white,scale=0.7] (w2) {2};
\node[vertex,below of=w2,text=white,scale=0.7] (w3) {3};
\node[vertex,right of=w3,text=white,scale=0.7] (w4) {4};
\node[vertex,right of=w4,text=white,scale=0.7] (w5) {5};
\node[vertex,above of=w5,text=white,scale=0.7] (w6) {6};
\node[vertex,above of=w6,text=white,scale=0.7] (w7) {7};

\node[vertex,right of=w7, text=white,scale=0.7,xshift=5mm] (w8) {1};
\node[vertex,right of=w6,text=white,scale=0.7,xshift=5mm] (w9) {2};
\node[vertex,right of=w5,text=white,scale=0.7,xshift=5mm] (w10) {3};
\node[vertex,right of=w10,text=white,scale=0.7] (w11) {4};
\node[vertex,right of=w11,text=white,scale=0.7] (w12) {5};
\node[vertex,above of=w12,text=white,scale=0.7] (w13) {6};
\node[vertex,above of=w13,text=white,scale=0.7] (w14) {7};

\node[vertex,right of=w14, text=white,scale=0.7,xshift=5mm] (w15) {1};
\node[vertex,right of=w13,text=white,scale=0.7,xshift=5mm] (w16) {2};
\node[vertex,right of=w12,text=white,scale=0.7,xshift=5mm] (w17) {3};
\node[vertex,right of=w17,text=white,scale=0.7] (w18) {4};
\node[vertex,right of=w18,text=white,scale=0.7] (w19) {5};
\node[vertex,above of=w19,text=white,scale=0.7] (w20) {6};
\node[vertex,above of=w20,text=white,scale=0.7] (w21) {7};

\node[vertex,right of=w21, text=white,scale=0.7,xshift=5mm] (w22) {1};
\node[vertex,right of=w20,text=white,scale=0.7,xshift=5mm] (w23) {2};
\node[vertex,right of=w19,text=white,scale=0.7,xshift=5mm] (w24) {3};
\node[vertex,right of=w24,text=white,scale=0.7] (w25) {4};
\node[vertex,right of=w25,text=white,scale=0.7] (w26) {5};
\node[vertex,above of=w26,text=white,scale=0.7] (w27) {6};
\node[vertex,above of=w27,text=white,scale=0.7] (w28) {7};

\node[vertex,right of=w28, text=white,scale=0.7,xshift=5mm] (w29) {1};
\node[vertex,right of=w27,text=white,scale=0.7,xshift=5mm] (w30) {2};
\node[vertex,right of=w26,text=white,scale=0.7,xshift=5mm] (w31) {3};
\node[vertex,right of=w31,text=white,scale=0.7] (w32) {4};
\node[vertex,right of=w32,text=white,scale=0.7] (w33) {5};
\node[vertex,above of=w33,text=white,scale=0.7] (w34) {6};
\node[vertex,above of=w34,text=white,scale=0.7] (w35) {7};

\path [->,shorten >=4pt,shorten <=4pt, thick](v6) edge node[left] {} (v9);
\path [->,shorten >=4pt,shorten <=4pt,thick](v13) edge node[left] {} (v16);
\path [->,shorten >=4pt,shorten <=4pt,thick](v20) edge node[left] {} (v23);
\path [->,shorten >=4pt,shorten <=4pt,thick](v27) edge node[left] {} (v30);

\path [->,shorten >=4pt,shorten <=4pt,thick](w6) edge node[left] {} (w9);
\path [->,shorten >=4pt,shorten <=4pt,thick](w13) edge node[left] {} (w16);
\path [->,shorten >=4pt,shorten <=4pt,thick](w20) edge node[left] {} (w23);
\path [->,shorten >=4pt,shorten <=4pt,thick](w27) edge node[left] {} (w30);

\begin{pgfonlayer}{background}
\begin{scope}[transparency group,opacity=.9]
\draw[edge,color=orange] (v1) -- (v2) -- (v3);

\draw[edge,color=orange] (v8) -- (v9) -- (v10);
\draw[edge,color=red] (v12) -- (v13) -- (v14);

\draw[edge,color=orange] (v15) -- (v16) -- (v17);
\draw[edge,color=red] (v19) -- (v20) -- (v21);
\draw[edge,color=green] (v17) -- (v18) -- (v19);

\draw[edge,color=orange] (v22) -- (v23) -- (v24);
\draw[edge,color=red] (v26) -- (v27) -- (v28);
\draw[edge,color=green] (v24) -- (v25) -- (v26);
\draw[edge,color=blue] (v23) -- (v25) -- (v27);

\draw[edge,color=orange] (v29) -- (v30) -- (v31);
\draw[edge,color=red] (v33) -- (v34) -- (v35);
\draw[edge,color=green] (v31) -- (v32) -- (v33);
\draw[edge,color=blue] (v30) -- (v32) -- (v34);
\draw[edge,color=yellow] (v29) -- (v32) -- (v35);

\draw[edge,color=orange] (w1) -- (w2) -- (w3);

\draw[edge,color=orange] (w8) -- (w9) -- (w10);
\draw[edge,color=red,line width=12pt] (w9) -- (w10) -- (w11);

\draw[edge,color=orange] (w15) -- (w16) -- (w17);
\draw[edge,color=red,line width=12pt] (w16) -- (w17) -- (w18);
\draw[edge,color=green,line width=12pt] (w15) -- (w16) -- (w18);

\draw[edge,color=orange] (w22) -- (w23) -- (w24);
\draw[edge,color=red,line width=12pt] (w23) -- (w24) -- (w25);
\draw[edge,color=green,line width=12pt] (w22) -- (w23) -- (w25);
\draw[edge,color=blue,line width=10pt] (w22) -- (w24) -- (w25);

\draw[edge,color=orange] (w29) -- (w30) -- (w31);
\draw[edge,color=red,line width=12pt] (w30) -- (w31) -- (w32);
\draw[edge,color=green,line width=12pt] (w29) -- (w30) -- (w32);
\draw[edge,color=blue,line width=10pt] (w29) -- (w31) -- (w32);
\draw[edge,color=yellow,line width=8pt] (w31) -- (w32) -- (w33);

\end{scope}
\end{pgfonlayer}
\node[elabel,color=orange,fill,opacity=.6,fill opacity=.6,below of=w3, label=right:\(e_1\)]  (e1) {};
\node[elabel,right of=e1,color=red,fill,opacity=.6,fill opacity=.6,label=right:\(e_2\)]  (e2) {};
\node[elabel,right of=e2,color=green,fill,opacity=.6,fill opacity=.6,label=right:\(e_3\)]  (e3) {};
\node[elabel,right of=e3,color=blue,fill,opacity=.6,fill opacity=.6,label=right:\(e_4\)]  (e4) {};
\node[elabel,right of=e4,color=yellow,fill,opacity=.6,fill opacity=.6,label=right:\(e_5\)]  (e5) {};

\end{tikzpicture}}
\caption{\textbf{Tensor entropy maximization/minimization.} The top row describes the first five stages of the tensor entropy maximization evolution with a growing number of hyperedges in the order of  $e_1=\{1,2,3\}$, $e_2=\{5,6,7\}$, $e_3=\{3,4,5\}$, $e_4=\{2,4,6\}$ and $e_ 5=\{1,4,7\}$. The bottom row reports the first five stages of the tensor entropy minimization process with a growing number of hyperedges in the order of  $e_1=\{1,2,3\}$, $e_2=\{2,3,4\}$, $e_3=\{1,2,4\}$, $e_4=\{1,3,4\}$ and $e_ 5=\{3,4,5\}$.}
\label{fig:1}
\end{figure*}
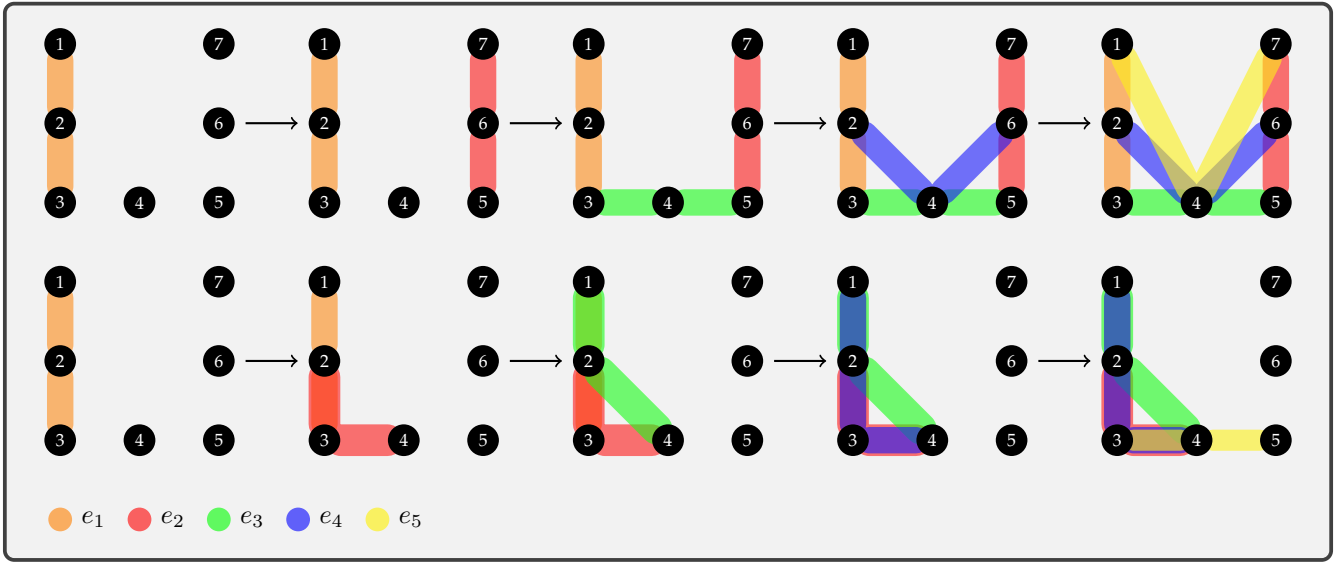

\begin{proposition}\label{pro:3}
Suppose that \textsf{G} is a complete $k$-uniform hypergraph with $n$ vertices for $k\geq 3$. Then the tensor entropy of \textsf{G} is given by
\begin{equation}\label{eq:12}
\begin{split}
\textsc{S} _{c} &= \frac{(1-n)\alpha}{(n-1)\alpha+\beta}\ln{\frac{\alpha}{(n-1)\alpha+\beta}} \\&- \frac{\beta}{(n-1)\alpha+\beta}\ln{\frac{\beta}{(n-1)\alpha+\beta}}
\end{split}
\end{equation} 
where,
\begin{small}
\begin{align}
\alpha &= (\frac{\Gamma(n)(\Gamma(n-k+1)+\Gamma(n))}{\Gamma(k)^2\Gamma(n-k+1)^2}-\frac{\Gamma(n-1)}{\Gamma(k)^2\Gamma(n-k)})^{\frac{1}{2}},\\
\beta  & = (\frac{\Gamma(n)(\Gamma(n-k+1)+\Gamma(n))}{\Gamma(k)^2\Gamma(n-k+1)^2}+\frac{(n-1)\Gamma(n-1)}{\Gamma(k)^2\Gamma(n-k)})^{\frac{1}{2}},
\end{align}
\end{small}
and $\Gamma(\cdot)$ is the Gamma function. 
\end{proposition}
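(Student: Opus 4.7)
The plan is to compute the singular values of $\textbf{L}_{(k)}$ directly by exploiting the vertex-transitivity of the complete $k$-uniform hypergraph and then plug them into the tensor entropy formula (\ref{eq:3}). Since the singular values of $\textbf{L}_{(k)}$ are the positive square roots of the eigenvalues of $M := \textbf{L}_{(k)}\textbf{L}_{(k)}^\top$, the first step is to show that $M$ has a highly structured form. Because the complete hypergraph is vertex-transitive, any permutation of the vertices induces a simultaneous permutation of the rows and columns of $\textsf{L}$ that leaves $\textsf{L}$ invariant; this forces $M_{ii} = d$ for all $i$ and $M_{ij} = o$ for all $i \neq j$, so $M = (d-o)\textbf{I} + o\textbf{J}$ where $\textbf{J}$ is the all-ones matrix.

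The second step is the combinatorial evaluation of $d$ and $o$. For the diagonal, I would observe that $d = \sum_{j_1,\dots,j_{k-1}} \textsf{L}_{ij_1\dots j_{k-1}}^2$ has exactly two kinds of nonzero contributions: the single tuple with $j_1=\dots=j_{k-1}=i$ contributing $\textsf{D}_{ii\dots i}^2 = \binom{n-1}{k-1}^2$, and the $(n-1)(n-2)\cdots(n-k+1)$ ordered tuples of distinct indices from $\{1,\dots,n\}\setminus\{i\}$, each contributing $\bigl(\tfrac{1}{(k-1)!}\bigr)^2$. Repackaging factorials as Gamma functions yields $d = \Gamma(n)(\Gamma(n)+\Gamma(n-k+1))/(\Gamma(k)^2\Gamma(n-k+1)^2)$. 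For the off-diagonal, fix $i\neq j$ and note that since $k\geq 3$ the only nonzero contributions to $o = \sum_{j_1,\dots,j_{k-1}} \textsf{L}_{ij_1\dots j_{k-1}}\textsf{L}_{jj_1\dots j_{k-1}}$ come from tuples $(j_1,\dots,j_{k-1})$ with all entries distinct from each other and from both $i$ and $j$ (the ``mixed'' diagonal/off-diagonal cases are killed by the repeated-index requirement of the degree tensor). Counting these as ordered tuples of length $k-1$ from an $(n-2)$-element set and multiplying by $\bigl(-\tfrac{1}{(k-1)!}\bigr)^2$ gives $o = \Gamma(n-1)/(\Gamma(k)^2\Gamma(n-k))$.

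The third step is a standard eigenvalue computation: $M = (d-o)\textbf{I} + o\textbf{J}$ has eigenvalues $d + (n-1)o$ with multiplicity one (eigenvector $\mathbf{1}$) and $d-o$ with multiplicity $n-1$. Taking square roots, the $k$-mode singular values of $\textsf{L}$ are $\beta := \sqrt{d+(n-1)o}$ once and $\alpha := \sqrt{d-o}$ with multiplicity $n-1$; direct substitution of the expressions for $d$ and $o$ recovers precisely the $\alpha,\beta$ stated in the proposition. Finally, normalizing by their sum $(n-1)\alpha + \beta$ and substituting into $\textsc{S} = -\sum_j \hat\gamma_j \ln\hat\gamma_j$ collapses to the two-term expression (\ref{eq:12}).

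I expect the main obstacle to be the combinatorial bookkeeping in step two, specifically making sure the contribution of the degree tensor is treated correctly (it is confined to the super-diagonal entry and cannot cross-couple with hyperedge entries when $k\geq 3$) and that the ordered-tuple counts are reshaped into the correct Gamma-function form. The algebraic simplification leading from $d,o$ to the stated $\alpha,\beta$ is routine once the counts are pinned down, and the spectral decomposition of $(d-o)\textbf{I} + o\textbf{J}$ is elementary.
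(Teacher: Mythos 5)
Your proposal is correct and follows essentially the same route as the paper: both compute $\textbf{L}_{(k)}\textbf{L}_{(k)}^\top$ as a matrix with constant diagonal $d$ and constant off-diagonal $\rho$, read off its two eigenvalues $d-\rho$ (multiplicity $n-1$) and $d+(n-1)\rho$, take square roots, and substitute into the entropy formula. Your explicit ordered-tuple counts for $d$ and $\rho$ (including the observation that the degree-tensor entry cannot cross-couple with hyperedge entries when $k\geq 3$) reproduce exactly the Gamma-function expressions for $\alpha$ and $\beta$, and are in fact more carefully justified than the paper's terse appeal to simplex numbers.
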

\begin{proof}
Based on the definitions of Laplacian tensor and $k$-mode unfolding, the matrix $\textbf{L}_{(k)}\textbf{L}_{(k)}^\top\in\mathbb{R}^{n\times n}$ is given by
\begin{equation*}
\textbf{L}_{(k)}\textbf{L}_{(k)}^\top = \begin{bmatrix} 
d & \rho & \rho & \dots & \rho\\
\rho & d & \rho & \dots & \rho\\
\rho & \rho & d & \dots & \rho\\
\vdots & \vdots & \vdots & \ddots & \vdots\\
\rho & \rho & \rho &\dots & d
\end{bmatrix},
\end{equation*}
where, 
$$d =\binom {n-1}{k-1}^2+\binom {n-1}{k-1}\frac{1}{(k-1)!},\text{ and } \rho= \frac{T_{n-k}}{(k-1)!^2}.$$ Here $T_{m}$ are the $k$-simplex numbers (e.g., when $k=3$, $T_{m}$ are the triangular numbers). Moreover, the eigenvalues of $\textbf{L}_{(k)}\textbf{L}_{(k)}^\top$ are $d-\rho$ with multiplicity $n-1$ and $d+(n-1)\rho$ with multiplicity 1. Hence, the result follows immediately. We write all the expressions using the Gamma function for simplicity. 
\end{proof}

From Proposition \ref{pro:5} and \ref{pro:3}, for arbitrary $k$-uniform hypergraph with $n$ vertices and $k\geq 3$, $\textsc{S} _{c}$ could be smaller than the entropies of other $d$-regular hypergraphs, and $\textsc{S} _{c}\leq \textsc{S} _{\max}\leq \ln{n}$. Furthermore, it can be shown that when $n$ becomes large, $\textsc{S} _{\max}\approx \ln{n}$.
\begin{corollary}
Suppose that \textsf{G} is a complete $k$-uniform hypergraph with $n$ vertices for $k\geq 3$. Then the tensor entropy 
$\textsc{S}_c \rightarrow \ln{n} \text{ as } n\rightarrow \infty.$
\end{corollary}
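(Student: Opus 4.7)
My strategy is to first recast the explicit formula (\ref{eq:12}) into a transparent form, and then to show that the single parameter $r := \beta/\alpha$ converges to $1$ as $n\to\infty$.

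First, I would carry out a direct algebraic simplification. Setting $T = (n-1)\alpha + \beta = \alpha(n-1+r)$ and expanding $\ln(\alpha/T) = -\ln(n-1+r)$ and $\ln(\beta/T) = \ln r - \ln(n-1+r)$, the contributions involving $\ln\alpha$ cancel because the normalised weights sum to $1$. I expect (\ref{eq:12}) to collapse to the clean identity
\begin{equation}\label{eq:clean}
\textsc{S}_{c} \;=\; \ln(n-1+r) \;-\; \frac{r\ln r}{n-1+r}, \qquad r = \beta/\alpha.
\end{equation}
Morally this is the Shannon entropy of a distribution with one atom of mass $r/(n-1+r)$ and $n-1$ atoms of mass $1/(n-1+r)$, rewritten in closed form.

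Next, I would control $r$ via the identities $\alpha^{2} = d - \rho$ and $\beta^{2} = d + (n-1)\rho$ that appear inside the proof of Proposition \ref{pro:3}, where $d = \binom{n-1}{k-1}^{2} + \binom{n-1}{k-1}/(k-1)!$ and $\rho = \Gamma(n-1)/[\Gamma(k)^{2}\,\Gamma(n-k)]$. These give $r^{2} - 1 = n\rho/(d-\rho)$. Standard asymptotics of Gamma-function ratios yield $d \sim n^{2k-2}/((k-1)!)^{2}$ and $\rho \sim n^{k-1}/((k-1)!)^{2}$ for fixed $k$, hence $n\rho/(d-\rho) = O(n^{2-k})$. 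Since $k\geq 3$ by hypothesis, this tends to zero and therefore $r\to 1$.

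Plugging back into (\ref{eq:clean}) closes the argument: $\ln(n-1+r) = \ln n + \ln\bigl(1 + (r-1)/n\bigr) = \ln n + o(1)$ as $n\to\infty$, while $|r\ln r|/(n-1+r) = O(1/n) \to 0$ because $r$ is bounded and $\ln r \to 0$. The hard part, in my view, is the opening algebraic collapse: verifying that the two apparently asymmetric terms of (\ref{eq:12}) really combine into the compact form (\ref{eq:clean}) takes some care, but once that identity is in hand the limit follows from elementary estimates on ratios of Gamma functions.
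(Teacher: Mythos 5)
Your proof is correct and follows essentially the same route as the paper: both arguments reduce to showing $\beta/\alpha\to 1$, so that the normalized singular values become asymptotically uniform and the entropy tends to $\ln n$. Your version is in fact the more careful one --- the identity $r^{2}-1 = n\rho/(d-\rho) = O(n^{2-k})$ makes precise where the hypothesis $k\geq 3$ is actually used (one needs $n\rho \ll d$, not merely $\rho \ll d$ as the paper's one-line comparison suggests), and the collapse of (\ref{eq:12}) to $\ln(n-1+r) - r\ln r/(n-1+r)$ turns the paper's ``follows immediately'' into an explicit elementary limit.
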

\begin{proof}
As $n\rightarrow \infty$, $\frac{\Gamma(n)(\Gamma(n-k+1)+\Gamma(n))}{\Gamma(k)^2\Gamma(n-k+1)^2}\gg\frac{\Gamma(n-1)}{\Gamma(k)^2\Gamma(n-k)}$ for fixed $k$. Thus, $\alpha \approx \beta$, and the result follows immediately. 
\end{proof}

In section \ref{sec:3}, we will show evidence that the tensor entropy (\ref{eq:3}) is a measure of regularity for general uniform hypergraphs. Large tensor entropy is characterized by the large number of connected vertices, high uniformity of vertex degrees, short path lengths and high level of nontrivial symmetricity. The entropy is small for uniform hypergraphs with large \textit{cliques} and long path lengths, i.e., hypergraphs in which the vertices form  highly connected clusters. Tensor entropy is also related to the clustering coefficients of uniform hypergraphs in a very nuanced way.

\subsection{Numerical method via tensor trains}\label{sec:2.3}
In reality, hypergraphs like co-authorship networks and protein-protein interaction networks exist in a very large scale, and computing the tensor entropy using the economy-size matrix SVD could be computationally expensive. Klus et al. \cite{Klus_2018} exploited TTD to efficiently calculate the Moore-Penrose (MP) inverse of the matrix obtained from any chosen unfolding of a given tensor. TTD provides a good compromise between numerical stability and level of compression, and has an associated algebra that facilitates computations. We thus adapt the framework of Klus et al. for the computation of the tensor entropy, see Algorithm \ref{alg:2}. In step 2, we assume that the construction of adjacency and degree tensors in the TT-format can be achieved due to their simple structures. In step 4, the left- and right-orthonormalization algorithms can be found in \cite{Klus_2018}. The computation and memory complexities of Algorithm \ref{alg:2} are estimated as $\mathscr{O}(knr^3)$ and $\mathscr{O}(knr^2)$, respectively, where $r$ can be viewed as the ``average'' rank of the TT-ranks. Both complexities are much lower than those from Algorithm \ref{alg:1} when $r$ is small.

\begin{algorithm}[ht]
\caption{Computing tensor entropy from TTD}
\label{alg:2}
\begin{algorithmic}[1]
\STATE{Given a $k$-uniform hypergraph \textsf{G} with $n$ vertices}\\
\STATE{Construct the adjacency and degree tensors $\textsf{A},\textsf{D}\in\mathbb{R}^{n\times n\times \dots \times n}$ in the TT-format from  \textsf{G}}\\
\STATE{Compute the Laplacian tensor $\textsf{L} = \textsf{D} -\textsf{A}$ with core tensors $\textsf{X}^{(p)}$ and TT-ranks $\{R_0,R_1,\dots,R_k\}$ based \\ on  the tensor train summation operation }\\
\STATE{Left-orthonormalize the first $k-2$ cores and right-orthonormalize the last core of \textsf{L}}\\
\STATE{Compute the economy-size matrix SVD of $\bar{\textbf{X}}^{(k-1)}$, \\ i.e., $\bar{\textbf{X}}^{(k-1)}=\textbf{U}\textbf{S}\textbf{V}^\top$ for $s=\texttt{rank}(\textbf{S})$ and let $\{\gamma_j\}_{j=1}^{s}=\texttt{diag}(\textbf{S})$}\\
\STATE{Set $\hat{\gamma}_j=\frac{\gamma_j}{\sum_{i=1}^s\gamma_i}$ and compute $\textsc{S}  = -\sum_{j=1}^s\hat{\gamma}_j\ln{\hat{\gamma}_j}$}
\RETURN The tensor entropy \textsc{S}  of \textsf{G}.
\end{algorithmic}
\end{algorithm}

\section{Experiments}\label{sec:3}
All the numerical examples presented were performed on a Linux machine with 8 GB RAM and a 2.4 GHz Intel Core i5 processor in MATLAB 2018b. The last example (section \ref{sec:train}) also used the MATLAB TT-Toolbox by Oseledets et al. \cite{tttoolbox}.

\subsection{Hyperedge growth model}\label{sec:3.1}
We consider the case where the number of vertices is fixed and new hyperedges are iteratively added to the uniform hypergraph. Figure \ref{fig:1}  presents the hyperedge growth evolution of a 3-uniform hypergraph with 7 vertices, and it describes the tensor entropy maximization and minimization evolutions. In addition to plotting the two entropy trajectories, we also compute some statistics of the structural properties including average shortest path length, index of dispersion of the degree distribution and average clustering coefficient of the hypergraphs during the two evolutions, see Figure \ref{fig:1.5}. If the two vertices are disconnected, we set the distance between them to be 4  for  the  purpose  of visualization in Figure \ref{fig:1.5}B.
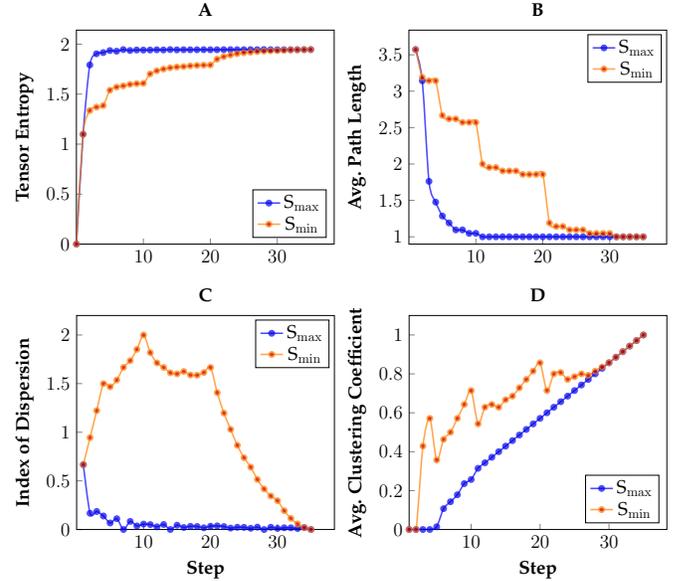
\begin{figure}[ht]
\centering
\begin{tikzpicture}[scale=0.5][font=\Large]
\begin{axis}[ylabel near ticks,
xlabel=\textbf{}, xtick={10,20,30}, ylabel=\textbf{Tensor Entropy}, xmin=0, ymin=0,title=\Large\textbf{A},legend pos = south east,xtick pos=left,ytick pos=left]
\addplot+[smooth,very thick,mark=*,opacity=.7] plot coordinates
{ (0,0) (1,1.0986) (2,1.7918) (3,1.9037) (4, 1.9157) (5,1.9359) (6,1.9297) (7,1.9456) (8,1.9366) (9,1.9417) (10,1.9408) (11,1.9414) (12, 1.9437) (13,1.9420) (14,1.9457) (15,1.9430) (16,1.9444) (17,1.9439) (18,1.9440) (19,1.9448) (20,1.9441) (21,1.9440) (22,1.9446) (23,1.9450) (24,1.9447) (25,1.9448) (26,1.9452) (27,1.9447) (28,1.9456) (29,1.9449) (30,1.9444) (31,1.9443) (32,1.9443) (33,1.9446) (34,1.9450) (35,1.9456)};
\addlegendentry{$\textsc{S}_{\max}$}
\addplot+[smooth,very thick,mark=*,opacity=.7,orange] plot coordinates
{ (0,0) (1,1.0986) (2,1.3358) (3,1.3701) (4, 1.3855) (5,1.5393) (6,1.5707) (7,1.5826) (8,1.5976) (9,1.6050) (10,1.6087) (11,1.7023) (12,1.7334) (13,1.7506) (14,1.7637) (15,1.7714) (16,1.7757) (17,1.7829) (18,1.7873) (19,1.7900) (20,1.7913) (21,1.8502) (22,1.8749) (23,1.8914) (24,1.9040) (25,1.9130) (26,1.9193) (27,1.9257) (28,1.9303) (29,1.9335) (30,1.9355) (31,1.9393) (32,1.9421) (33,1.9439) (34,1.9450) (35,1.9456)};
\addlegendentry{$\textsc{S}_{\min}$}
\end{axis}
\end{tikzpicture}
\begin{tikzpicture}[scale=0.5][font=\Large]
\begin{axis}[ylabel near ticks,
xlabel=\textbf{}, xtick={10,20,30}, ylabel=\textbf{Avg. Path Length}, xmin=0, ymin=0.9,title=\textbf{B},xtick pos=left,ytick pos=left]
\addplot+[smooth,very thick,mark=*,opacity=.7] plot coordinates
{ (1,3.5714) (2,3.1429) (3,1.7619) (4, 1.4762) (5,1.2857) (6,1.1905) (7,1.0952) (8,1.0952) (9,1.0476) (10,1.0476) (11,1) (12,1) (13,1) (14,1) (15,1) (16,1) (17,1) (18,1) (19,1) (20,1) (21,1) (22,1) (23,1) (24,1) (25,1) (26,1) (27,1) (28,1) (29,1) (30,1) (31,1) (32,1) (33,1) (34,1) (35,1)};
\addlegendentry{$\textsc{S}_{\max}$}
\addplot+[smooth,very thick,mark=*,opacity=.7,orange] plot coordinates
{ (1, 3.5714) (2,3.1905) (3,3.1429) (4,3.1429) (5,2.6667) (6,2.6190) (7,2.6190) (8, 2.5714) (9, 2.5714) (10, 2.5714) (11,2.0000) (12,1.9524) (13,1.9524) (14,1.9048) (15,1.9048) (16,1.9048) (17,1.8571) (18,1.8571) (19,1.8571) (20,1.8571) (21,1.1905) (22,1.1429) (23,1.1429) (24,1.0952) (25,1.0952) (26,1.0952) (27,1.0476) (28,1.0476) (29,1.0476) (30,1.0476) (31,1) (32,1) (33,1) (34,1) (35,1)};
\addlegendentry{$\textsc{S}_{\min}$}
\end{axis}
\end{tikzpicture}

\begin{tikzpicture}[scale=0.5][font=\Large]
\begin{axis}[ylabel near ticks,
xlabel=\textbf{Step},ylabel=\textbf{Index of Dispersion}, xmin=0, ymin=0,xtick={10,20,30},title=\textbf{C},xtick pos=left,ytick pos=left]
\addplot+[smooth,very thick,mark=*,opacity=.7] coordinates
{ (1,0.6667) (2,0.1667) (3,0.1852) (4, 0.1389) (5,0.0667) (6,0.1111) (7,0) (8,0.0833) (9,0.0370) (10,0.0556) (11,0.0505) (12,0.0278) (13,0.0513) (14,0) (15,0.0444) (16,0.0208) (17,0.0327) (18,0.0309) (19,0.0175) (20,0.0333) (21,0.0370) (22,0.0303) (23,0.0145) (24,0.0231) (25,0.0222) (26,0.0128) (27,0.0247) (28,0) (29,0.0230) (30,0.0111) (31,0.0179) (32,0.0174) (33,0.0101) (34,0.0196) (35,0)};
\addlegendentry{$\textsc{S}_{\max}$}
\addplot+[smooth,very thick,mark=*,opacity=.7,orange] coordinates
{ (1,0.6667) (2,0.9444) (3,1.2222) (4, 1.5000) (5,1.4667) (6,1.5370) (7,1.6667) (8,1.7361) (9,1.8519) (10,2) (11,1.8182) (12,1.7130) (13,1.6667) (14,1.6111) (15,1.6000) (16,1.6250) (17,1.5882) (18,1.5864) (19,1.6140) (20,1.6667) (21,1.4074) (22,1.1970) (23,1.0290) (24,0.8657) (25,0.7378) (26,0.6410) (27,0.5144) (28,0.4167) (29,0.3448) (30,0.2963) (31,0.1935) (32,0.1146) (33,0.0572) (34,0.0196) (35,0) };
\addlegendentry{$\textsc{S}_{\min}$}
\end{axis}
\end{tikzpicture}
\begin{tikzpicture}[scale=0.5][font=\Large]
\begin{axis}[ylabel near ticks,
xlabel=\textbf{Step},ylabel=\textbf{Avg. Clustering Coefficient}, xmin=1, ymin=0,xtick={10,20,30}, legend pos = south east,title=\textbf{D},xtick pos=left,ytick pos=left]
\addplot+[smooth,very thick,mark=*,opacity=.7] coordinates
{ (1,0) (2,0) (3,0) (4, 0) (5,0.0143) (6,0.1071) (7,0.1429) (8,0.1786) (9,0.2357) (10,0.2571) (11,0.3143) (12,0.3429) (13,0.3714) (14,0.4000) (15,0.4286) (16,0.4571) (17,0.4857) (18, 0.5143) (19,0.5429) (20,0.5714) (21,0.6) (22,0.6286) (23,0.6571) (24,0.6857) (25,0.7143) (26,0.7429) (27,0.7714) (28,0.8000) (29,0.8286) (30,0.8571) (31,0.8857) (32,0.9143) (33,0.9429) (34,0.9714) (35,1.0000)};
\addlegendentry{$\textsc{S}_{\max}$}
\addplot+[smooth,very thick,mark=*,opacity=.7,orange] coordinates
{ (1,0) (2,0) (3,0.4286) (4,0.5714) (5,0.3571) (6,0.4632) (7,0.5) (8,0.5714) (9,0.6429) (10,0.7143) (11,0.5429) (12,0.6286) (13,0.6429) (14,0.6286)  (15,0.6671) (16,0.6857) (17,0.7286) (18,0.7714) (19,0.8143) (20,0.8571) (21,0.7143) (22,0.8000) (23,0.8071) (24,0.7714) (25,0.7857) (26,0.8) (27,0.7929) (28,0.8143) (29,0.8357) (30,0.8571) (31,0.8857)  (32,0.9143) (33,0.9429) (34,0.9714) (35,1)};
\addlegendentry{$\textsc{S}_{\min}$}
\end{axis}
\end{tikzpicture}
\caption{\textbf{Hyperedge growth model features.}  (A), (B), (C) and (D) Trajectories of tensor entropy, average path length, index of dispersion and average clustering coefficient with respect to the hyperedge adding steps. }
\label{fig:1.5}
\end{figure}

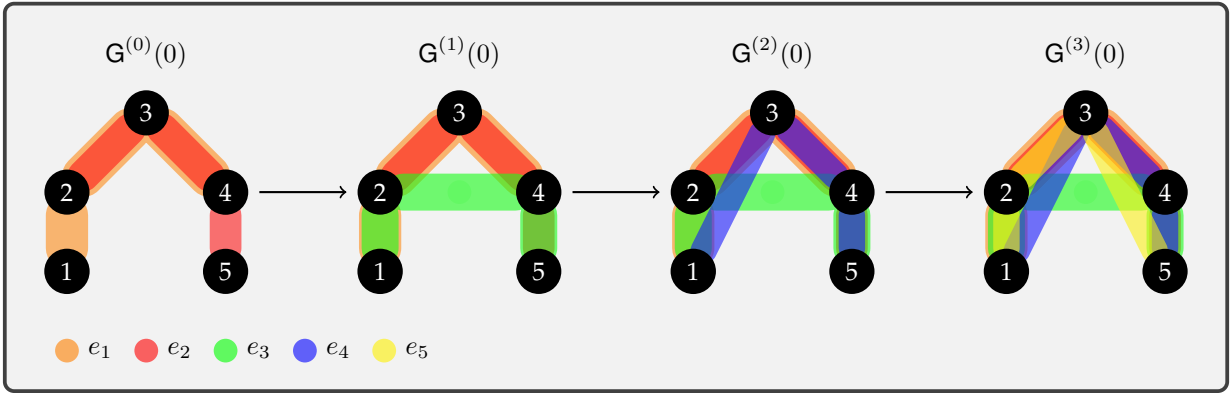
\begin{figure*}[hbt!]
\centering
\tcbox[colback=mygray]{
 \begin{tikzpicture}
\node[vertex,text=white] (v1) {2};
\node[vertex,below of=v1,text=white] (v2) {1};
\node[vertex,right of=v1,text=white,mygray] (v3) {};
\node[vertex,above of=v3,text=white] (v4) {3};
\node[vertex,right of=v3,text=white] (v5) {4};
\node[vertex,below of=v5,text=white] (v6) {5};
\node[above of=v4,yshift=-2mm]  (A) {$\textsf{G}^{(0)}(0)$};

\node[vertex,right of=v5,text=white, xshift=10mm] (v7) {2};
\node[vertex,below of=v7,text=white] (v8) {1};
\node[vertex,right of=v7,text=white,green,fill opacity=.1] (v9) {};
\node[vertex,above of=v9,text=white] (v10) {3};
\node[vertex,right of=v9,text=white] (v11) {4};
\node[vertex,below of=v11,text=white] (v12) {5};
\node[above of=v10,yshift=-2mm]  (B) {$\textsf{G}^{(1)}(0)$};

\node[vertex,right of=v11,text=white, xshift=10mm] (v13) {2};
\node[vertex,below of=v13,text=white] (v14) {1};
\node[vertex,right of=v13,text=white,green,fill opacity=.1] (v15) {};
\node[vertex,above of=v15,text=white] (v16) {3};
\node[vertex,right of=v15,text=white] (v17) {4};
\node[vertex,below of=v17,text=white] (v18) {5};
\node[above of=v16,yshift=-2mm]  (C) {$\textsf{G}^{(2)}(0)$};

\node[vertex,right of=v17,text=white, xshift=10mm] (v19) {2};
\node[vertex,below of=v19,text=white] (v20) {1};
\node[vertex,right of=v19,text=white,green,fill opacity=.1] (v21) {};
\node[vertex,above of=v21,text=white] (v22) {3};
\node[vertex,right of=v21,text=white] (v23) {4};
\node[vertex,below of=v23,text=white] (v24) {5};
\node[above of=v22,yshift=-2mm]  (D) {$\textsf{G}^{(3)}(0)$};

\path [->,shorten >=4pt,shorten <=4pt, thick](v5) edge node[left] {} (v7);
\path [->,shorten >=4pt,shorten <=4pt, thick](v11) edge node[left] {} (v13);
\path [->,shorten >=4pt,shorten <=4pt, thick](v17) edge node[left] {} (v19);

\begin{pgfonlayer}{background}
\begin{scope}[transparency group,opacity=.9]
\draw[edge,color=orange,line width=16pt] (v2) -- (v1) -- (v4) -- (v5);
\draw[edge,color=red,line width=12pt] (v1) -- (v4) -- (v5) -- (v6);

\draw[edge,color=orange,line width=16pt] (v8) -- (v7) -- (v10) -- (v11);
\draw[edge,color=red,line width=12pt] (v7) -- (v10) -- (v11) -- (v12);
\draw[edge,color=green, line width=14pt] (v8) -- (v7) -- (v11) -- (v12);

\draw[edge,color=orange,line width=16pt] (v14) -- (v13) -- (v16) -- (v17);
\draw[edge,color=red,line width=12pt] (v13) -- (v16) -- (v17) -- (v18);
\draw[edge,color=green, line width=14pt] (v14) -- (v13) -- (v17) -- (v18);
\draw[edge,color=blue] (v14) -- (v16) -- (v17) -- (v18);

\draw[edge,color=orange,line width=16pt] (v20) -- (v19) -- (v22) -- (v23);
\draw[edge,color=red,line width=12pt] (v19) -- (v22) -- (v23) -- (v24);
\draw[edge,color=green, line width=14pt] (v20) -- (v19) -- (v23) -- (v24);
\draw[edge,color=blue] (v20) -- (v22) -- (v23) -- (v24);
\draw[edge,color=yellow] (v20) -- (v19) -- (v22) -- (v24);
\end{scope}
\end{pgfonlayer}
\node[elabel,color=orange,fill,opacity=.6,fill opacity=.6,below of=v2, label=right:\(e_1\)]  (e1) {};
\node[elabel,right of=e1,color=red,fill,opacity=.6,fill opacity=.6,label=right:\(e_2\)]  (e2) {};
\node[elabel,right of=e2,color=green,fill,opacity=.6,fill opacity=.6,label=right:\(e_3\)]  (e3) {};
\node[elabel,right of=e3,color=blue,fill,opacity=.6,fill opacity=.6,label=right:\(e_4\)]  (e4) {};
\node[elabel,right of=e4,color=yellow,fill,opacity=.6,fill opacity=.6,label=right:\(e_5\)]  (e5) {};
\end{tikzpicture}
}
\caption{\textbf{Initial hypergraphs' structures for different $\boldsymbol{q}$.} The plot describes the \textit{cliques}' formation in the first five vertices of the uniform hypergraph with the rewiring probability zero, in which $e_1=\{1,2,3,4\}$, $e_2=\{2,3,4,5\}$, $e_3=\{1,2,4,5\}$, $e_4=\{1,3,4,5\}$ and $e_5=\{1,2,3,5\}$. The rest have the same patterns in every five vertices for a corresponding $q$.}
\label{fig:3}
\end{figure*}

Let's denote the hypergraphs that achieve maximum (or minimum) tensor entropy at step $j$ as $\textsf{G}_{\max}^{(j)}$ (or $\textsf{G}_{\min}^{(j)}$) for $j = 1,2,\dots,35$. Similar to maximizing graph entropy, maximizing the tensor entropy will first connect all the vertices and then prefer to choose lower degree vertices with larger average geodesic distances, i.e., finding the geodesic distances between each pair in the triples and taking the mean, see Figure \ref{fig:1} and \ref{fig:1.5}B. The average geodesic distances may lose importance if one wants to predict the next step as the hypergraph becomes complex. Moreover, the vertices of $\textsf{G}_{\max}^{(j)}$ tend to have ``almost equal'' or equal degree which leads to a low index of dispersion, see Figure \ref{fig:1.5}C. In particular, $\textsf{G}_{\max}^{(j)}$ are the $\frac{k}{n}j$-regular hypergraphs for the early stages of the evolution, i.e., $j=7,14$. However, as the hypergraph becomes dense, it is possible that $\textsf{G}_{\max}^{(j)}$ will miss the regularity, i.e., $j=21$. Additionally, the average clustering coefficient, in general, grows with increase of hyperedges, but the average growth rate for $\textsf{G}_{\max}^{(j)}$ is lower than that for $\textsf{G}_{\min}^{(j)}$, see Figure \ref{fig:1.5}D. Furthermore, nontrivial symmetricity plays a role in maximizing the tensor entropy. For example, in $\textsf{G}_{\max}^{(3)}$, the vertices $\{1,2,6\}$ and $\{2,4,6\}$ have the same average geodesic distances (both are equal to $\frac{7}{3}$), and the maximized tensor entropy returns the more symmetric $\textsf{G}_{\max}^{(4)}$. We also find that candidate hyperedges that intersect more existing hyperedges would return higher tensor entropy, which also explains the above example.

However, there exists one huge disparity between the von Neumann graph entropy and the tensor entropy. The tensor entropy can temporarily decrease during the maximizing process as seen in Figure \ref{fig:1.5}A. We observe that once the maximization evolution reaches some regularity or high level of nontrivial symmetricity, and the next step breaks such regularity or symmetricity, the tensor entropy will decrease. In other words, for these highly regular or highly symmetric $\textsf{G}_{\max}^{(j)}$, the corresponding tensor entropies $\textsc{S}^{(j)}_{\max}$ achieve local maxima. On the other hand, $\textsc{S}_{\min}^{(j)}$, the tensor entropies of $\textsf{G}_{\min}^{(j)}$, are similar to the von Neumann graph entropy. Minimizing the tensor entropy would result in the formations of complete sub-hypergraphs (\textit{cliques}), see Figure \ref{fig:1}.  We can detect large jumps and drops in the next steps after completions of the sub-hypergraphs in Figure \ref{fig:1.5}A and \ref{fig:1.5}B, respectively. In order to make the discoveries more convincing, we repeated the same processes for $k$-uniform hypergraphs with different number of vertices and values of $k$, and observed similar results.

\subsection{The Watts-Strogatz model}
We perform an experiment on a synthetic random uniform hypergraph \textsf{G} with $n=100$ and $k=4$. Similar to the Watts-Strogatz graph, the initial hypergraph is 2-regular with lattice structure. Let $q$ be the number of hyperedges added to the hypergraph in order to form \textit{cliques} in every five vertices, and $p$ be the rewiring probability of hyperedges, see Figure \ref{fig:3}. Then $\textsf{G}^{(q)}(p)$ denotes the random uniform hypergraphs generated by the rewiring probability $p$ for different $q$. Particularly, when $q=3$, the tensor entropy $\textsc{S}^{(3)}(0)=4.5527$, the average clustering coefficient $C_{\text{avg}}^{(3)}(0) = 0.7571$ and the average path length $L_{\text{avg}}^{(3)}(0)=7.0606$. The goal of the experiment is to explore the relations between the tensor entropy, the average clustering coefficient and path length with increasing the hyperedge rewiring probability $p$ for different $q$. We also calculate the small world coefficient for the random hypergraphs, denoted by $\sigma^{(q)}(p)$. For each $\textsf{G}^{(q)}(p)$, we compute its tensor entropy, average clustering coefficient, average path length and small world coefficient 10 times and take the means for $q=2,3$.

\usepgfplotslibrary{groupplots}
\begin{figure}[ht]
\centering
\begin{tikzpicture}[scale=0.5][font=\large]
\begin{groupplot}[group style={group size= 2 by 2, ,vertical sep=5em, horizontal sep=5em}]
 \nextgroupplot[ylabel near ticks,
xlabel=\textbf{Rewiring Probability},ylabel=\textbf{Tensor Entropy}, xmode=log,xtick={1e-3,1e-2,1e-1,1}, ymin=4.51,title=\textbf{A},legend pos = south west, ytick={4.52,4.53,4.54,4.55,4.56},xtick pos=left,ytick pos=left]
\addplot+[smooth,very thick,mark=*,opacity=.7,blue] coordinates
{ (0.001,4.5623)  (0.002,4.5623)  (0.003, 4.5627) (0.004,4.5623) (0.005,4.5616) (0.006,4.5615) (0.007,4.5616) (0.008, 4.5594) (0.009,4.5614) (0.01,4.5610) (0.02,4.5604) (0.03,4.5567) (0.04,4.5558) (0.05,4.5530) (0.06,4.5540) (0.07,4.5504) (0.08,4.5488) (0.09,4.5463) (0.1,4.5435) (0.2,4.5307) (0.3,4.5191) (0.4,4.5129)};
\addlegendentry{$q=2$}
\addplot+[smooth,very thick,mark=*,opacity=.7, orange] coordinates
{ (0.001,4.5528) (0.002,4.5525)  (0.003, 4.5523) (0.004,4.5524) (0.005,4.5527) (0.006,4.5512) (0.007,4.5515) (0.008, 4.5515) (0.009,4.5513) (0.01,4.5507) (0.02,4.5518) (0.03,4.5505) (0.04,4.5475) (0.05,4.5502) (0.06,4.5465) (0.07,4.5469) (0.08,4.5438) (0.09,4.5453) (0.1,4.5423) (0.2,4.5364) (0.3,4.5276) (0.4,4.5266)};
\addlegendentry{$q=3$}

 \nextgroupplot[ytick={0.2,0.4,0.6,0.8,1}, ylabel near ticks,
xlabel=\textbf{Rewiring Probability},ylabel=\textbf{Normalized Small World Coeff.}, xmode=log,xtick={1e-3,1e-2,1e-1,1}, ymin=0,title=\textbf{B},legend pos = south west,xtick pos=left,ytick pos=left]
\addplot+[smooth,very thick,mark=*,opacity=.7,blue] coordinates
{(0.001,0.4012) (0.002,0.3993) (0.003,0.3954) (0.004, 0.3963) (0.005, 0.4400) (0.006,0.4331) (0.007,0.4090) (0.008, 0.4763) (0.009,0.4428) (0.01,0.4273) (0.02,0.4764) (0.03,0.5343) (0.04,0.5281) (0.05,0.5485) (0.06,0.5301) (0.07,0.5091) (0.08,0.4992) (0.09, 0.5476) (0.1,0.4784) (0.2,0.2740) (0.3,0.1568) (0.4,0.0525)};
\addlegendentry{$q=2$}
\addplot+[smooth,very thick,mark=*,opacity=.7,orange] coordinates
{ (0.001, 0.7007) (0.002,0.7399) (0.003,0.7227) (0.004,0.7352) (0.005,0.7436) (0.006,0.8804) (0.007,0.8453) (0.008, 0.8532) (0.009,0.8790) (0.01,0.9094) (0.02,0.8761) (0.03,1.0000) (0.04,0.9910) (0.05,0.9954) (0.06,0.9881) (0.07,0.9885) (0.08,0.9092) (0.09,0.9005) (0.1,0.8481 ) (0.2,0.5380) (0.3,0.3108) (0.4,0.1251)};
\addlegendentry{$q=3$}

 \nextgroupplot[ ylabel near ticks,
xlabel=\textbf{Rewiring Probability},ylabel=\textbf{Ratio}, xmode=log,xtick={1e-3,1e-2,1e-1,1}, ymin=0,title=\textbf{C},legend pos = south west,xtick pos=left,ytick pos=left,ytick={0.20,0.40,0.60,0.80,1.00}]
\addplot+[smooth,very thick,mark=*,opacity=.7] coordinates
{(0.001,0.9937) (0.002,0.9857) (0.003,0.9834) (0.004,0.9705) (0.005,0.9832) (0.006,0.9468) (0.007,0.9522) (0.008,0.9534) (0.009,0.9493) (0.01,0.9378) (0.02,0.8745) (0.03,0.8495) (0.04,0.8193) (0.05,0.7964) (0.06,0.7148) (0.07,0.7249) (0.08,0.6102) (0.09,0.6021) (0.1,0.5375) (0.2,0.2980) (0.3,0.1691) (0.4,0.0594)};
\addlegendentry{Avg. Clustering Coeff.}
\addplot+[smooth,very thick,mark=*,opacity=.7,orange] coordinates
{(0.001,0.9815) (0.002,0.9377) (0.003,0.9441) (0.004,0.9192) (0.005,0.9328) (0.006,0.7767) (0.007,0.7923) (0.008, 0.7236) (0.009,0.7676) (0.01,0.7261) (0.02,0.7081) (0.03,0.5903) (0.04,0.5676) (0.05,0.5578) (0.06,0.4991) (0.07,0.5063) (0.08,0.4630) (0.09,0.4558) (0.1,0.4367) (0.2,0.3795) (0.3, 0.3474) (0.4,0.3241)};
\addlegendentry{Avg. Path Length}

 \nextgroupplot[ylabel near ticks,
xlabel=\textbf{Ratio},ylabel=\textbf{Tensor Entropy}, xmin=0, ymin=4.52,title=\textbf{D},legend pos = south east, ytick={4.53,4.54,4.55,4.56},xtick align=inside,xtick pos=left,ytick pos=left]
\addplot + [only marks, opacity=.7, blue] coordinates
{ (0.9937,4.5528) (0.9857,4.5525)  (0.9834, 4.5523) (0.9705,4.5524) (0.9832,4.5527) (0.9468,4.5512) (0.9522,4.5515) (0.9534, 4.5515) (0.9493,4.5513) (0.9378,4.5507) (0.8745,4.5518) (0.8495,4.5505) (0.8193,4.5475) (0.7964,4.5502) (0.7148,4.5465) (0.7249,4.5469) (0.6102,4.5438) (0.6021,4.5453) (0.5375,4.5423) (0.2980,4.5364) (0.1691,4.5276) (0.0594,4.5266)};
\addlegendentry{Avg. Clustering Coeff.}
\addplot + [only marks, opacity=.7, orange] coordinates
{ (0.9815,4.5528) (0.9377,4.5525)  (0.9441, 4.5523) (0.9192,4.5524) (0.9328,4.5527) (0.7767,4.5512) (0.7923,4.5515) (0.7236, 4.5515) (0.7676,4.5513) (0.7261,4.5507) (0.7081,4.5518) (0.5903,4.5505) (0.5676,4.5475) (0.5578,4.5502) (0.4991,4.5465) (0.5063,4.5469) (0.4630,4.5438) (0.4558,4.5453) (0.4367,4.5423) (0.3795,4.5364) (0.3474,4.5276) (0.3241,4.5266)};
\addlegendentry{Avg. Path Length}
\end{groupplot}
\end{tikzpicture}

\caption{\textbf{The Watts-Strogatz model features.} (A) Tensor entropies of random uniform hypergraphs with different rewiring probabilities for different $q$. (B) Normalized small world coefficients of random uniform hypergraphs with different rewiring probabilities for different $q$. (C) Ratios $C_{\text{avg}}^{(3)}(p)/C_{\text{avg}}^{(3)}(0)$ and $L_{\text{avg}}^{(3)}(p)/L_{\text{avg}}^{(3)}(0)$ of random uniform hypergraphs with different rewiring probabilities for $q=3$. (D) Scatter plot between the tensor entropy and the two ratios from (C).}
\label{fig:3.5}
\end{figure}
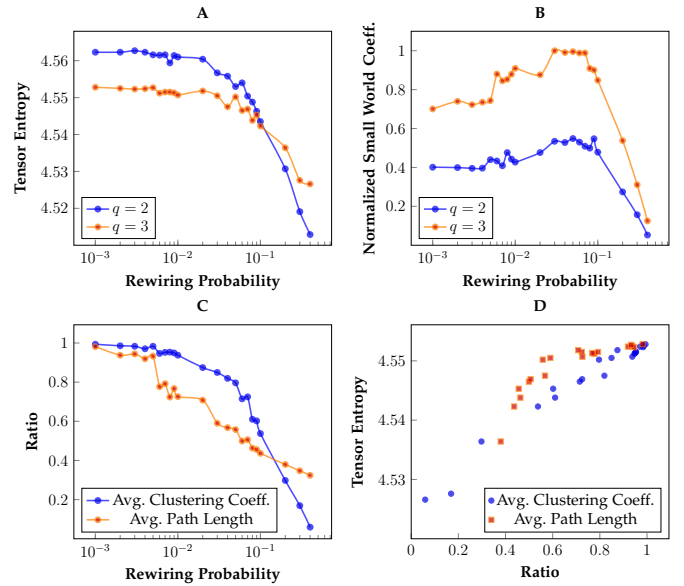

\begin{figure*}[ht]
\centering
\begin{tikzpicture}[scale=0.66][font=\large]
\begin{groupplot}[group style={group size=3 by 1, horizontal sep=5em}]
\nextgroupplot[ybar, enlargelimits=0.15, bar width = 5pt, ymax=2200, enlarge y limits=0.05, legend columns=-1,
    ylabel={\textbf{Num. of Contacts}}, 
    symbolic x coords={A,B,C,D,E,F,G,H,I},xtick pos=left,ytick pos=left,
    xtick=data, ylabel near ticks, xticklabels={1,2,3,4,5,6,7,8,9}, title=\textbf{A}, xlabel=\textbf{Hour}
    ]
\addplot+[blue, mark options={draw=blue,fill=blue}, opacity=.7] coordinates {(A,800) (B,1886) (C,1618) (D,1677) (E,1140) (F,1407) (G,981) (H,1794) (I,611)};
\addlegendentry{$k=2$}
\addplot[fill=orange,opacity=.7] coordinates {(A,104) (B, 520) (C,286) (D,509) (E,332) (F,269) (G,202) (H,353) (I,51)};
\addlegendentry{$k=3$}
\nextgroupplot[ybar, enlargelimits=0.15, bar width = 5pt, ymax=280, enlarge y limits=0.025, legend columns=-1, ymin=0,
    ylabel={\textbf{Num. of People Involved}}, 
    symbolic x coords={A,B,C,D,E,F,G,H,I},xtick pos=left,ytick pos=left,
    xtick=data, ylabel near ticks, xticklabels={1,2,3,4,5,6,7,8,9}, title=\textbf{B}, xlabel=\textbf{Hour}
    ]
\addplot[fill=blue, opacity=.7] coordinates {(A,228) (B,231) (C,233) (D,220) (E,118) (F,217) (G,215) (H,232) (I,229)};
\addlegendentry{$k=2$}
\addplot[fill=orange,opacity=.7] coordinates {(A,143) (B, 207) (C,188) (D,155) (E,114) (F,154) (G,134) (H,194) (I,76)};
\addlegendentry{$k=3$}
\nextgroupplot[xlabel={\textbf{Hour}}, ylabel={\textbf{Tensor Entropy}}, ylabel near ticks, title={\textbf{C}}, ymin=4.1, legend pos=south west, ytick={4.2,4.4,4.6,4.8,5,5.2,5.4}, xlabel near ticks,xtick pos=left,ytick pos=left,symbolic x coords={A,B,C,D,E,F,G,H,I}, xtick=data, xticklabels={1,2,3,4,5,6,7,8,9},xtick align=outside]
\addplot+[smooth, very thick,mark=*,blue, mark options={draw=blue,fill=blue}, opacity=.7] coordinates {(A,5.2226) (B,5.2727) (C,5.2537) (D,5.0540) (E,4.6473) (F,5.0985) (G,5.1163) (H,5.2486) (I,5.1662)};
\addlegendentry{$k=2$}
\addplot+[smooth, very thick,mark=*,orange, mark options={draw=orange,fill=orange}, opacity=.7] coordinates {(A,4.7736) (B,5.0847) (C,5.0539) (D,4.8104) (E,4.5688) (F,4.7933) (G,4.5510) (H,5.0184) (I,4.1486)};
\addlegendentry{$k=3$}
\end{groupplot}
\end{tikzpicture}
\caption{\textbf{Primary school contact features.} (A) Number of the two-person and three-person contacts amongst the children and teachers every one hour of a day. (B) Number of children and teachers involved every one hour of a day in the two-person and three-person contacts, respectively. (C) Trajectories of the von Neumann entropy and the tensor entropy for the two-person and three-person contacts of a day.}
\label{fig:5}
\end{figure*}
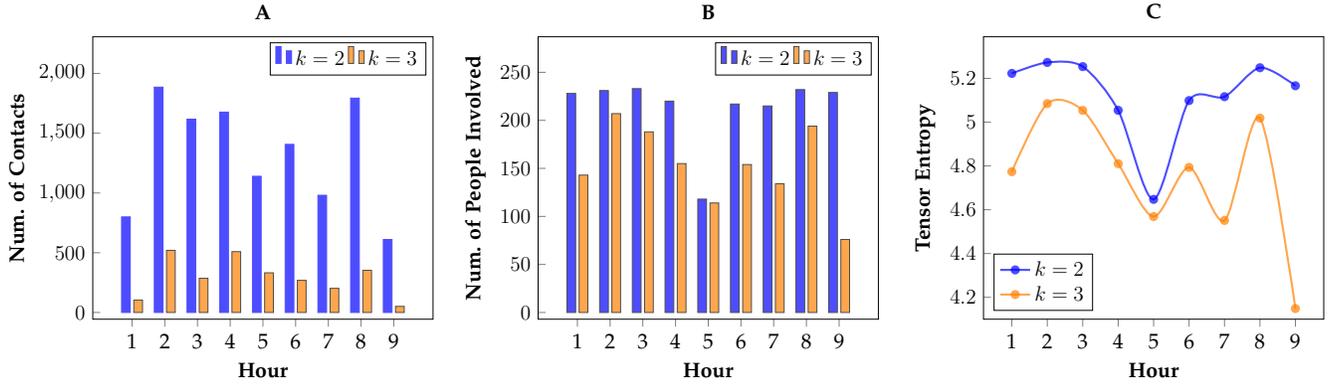

The results are shown in Figure \ref{fig:3.5}. In general, with increasing rewiring probability $p$, the tensor entropy decays for both $q=2$ and $3$, see Figure \ref{fig:3.5}A. Initially, the tensor entropies for $q=2$ are higher than those for $q=3$, which implies that lower average clustering coefficient yields larger tensor entropy at the same probability ($\sim$0.50 and $\sim$0.73, respectively). However, we see a strictly positive correlation between the tensor entropy and the average clustering coefficient as $p$ increases, see Figure \ref{fig:3.5}D. After $p=0.1$,  the tensor entropies for $q=2$ decays faster than that for $q=3$, indicating that higher average path length returns lower tensor entropy ($\sim$2.75 and $\sim$2.66, respectively). Moreover, the curves of the small world coefficient in Figure \ref{fig:3.5}B are very similar to the one in the Watts-Strogatz graph, in which it grows gradually for $p<0.1$ and decreases quickly after $p>0.1$. When the rewiring probability $p$ is between 0.03 and 0.1, $\textsf{G}^{(3)}(p)$ have apparent small world characteristics, e.g., $C_{\text{avg}}^{(3)}(0.07) = 0.5488$ and $L_{\text{avg}}^{(3)}(0.07)=3.5748$, over the 100 vertices. In addition, we find that the average clustering coefficient pattern is similar to tensor entropy. On the contrary, the average path length has a different trend. It decreases faster at small $p$ and more slowly than the average clustering coefficient at large $p$, see Figure \ref{fig:3.5}C.

\subsection{Primary school contact}
The primary school contact dataset contains the temporal network of face-to-face contacts amongst the children and teachers (242 people in total) at a primary school, in which an active contact can include more than two people \cite{10.1186/s12879-014-0695-9,10.1371/journal.pone.0023176}. In this study, we consider the cases of two-person contacts (i.e., a normal graph) and three-person contacts (i.e., a 3-uniform hypergraph) per hour, and explore the relations of tensor entropy with contact frequencies and number of people involved over one school day. The results are shown in Figure \ref{fig:5}, in which the two entropies have a similar and reasonable pattern. Both two-person and three-person contacts are more active at the second and eighth hours, and are less active at the fifth hour. Like the von Neumann entropy ($k=2$), the tensor entropy ($k=3$) is expected to grow with increased number of connected vertices, see Figure \ref{fig:5}B and \ref{fig:5}C, which implies that more children and teachers involved will yield larger tensor entropies. On the other hand, the entropy also heavily relies on the complexity and regularity of the uniform hypergraphs as demonstrated before. For instance, the number of people involved at the seventh hour is greater than that at the fifth hour for $k=3$, but the tensor entropies are opposite because more contacts are made at the fifth hour, increasing the complexity or regularity in the uniform hypergraph, see Figure \ref{fig:5}A. 

\subsection{Mouse neuron endomicroscopy}
The goal of the experiment is to observe mouse neuron activation patterns using fluorescence across space and time before and after food treatment in the mouse hypothalamus. Large changes in fluorescence are inferred to be active neurons that are ``firing.'' The mouse endomicroscopy dataset is an imaging video created under the 10-minute periods of feeding, fasting and re-feeding. The imaging region contains in total 20 neuron cells, and the levels of ``firing'' are also recorded for each neuron. In this study, we build $k$-uniform hypergraphs for each 10-minute interval based on the correlations/multi-correlations of the neuron ``firing'' level for $k=2,3$. The multi-correlation among three variables is defined by
\begin{equation}\label{eq:99}
r^2 = c_1^2 + c_2^2 + c_3^2 - 2c_1c_2c_3, 
\end{equation} 
where, $c_1$, $c_2$ and $c_3$ are the correlations between the three variables \cite{1401.4827v5}. It turns out that the multi-correlation is a generalization of Pearson correlation which can measure the strength of multivariate correlation. 

\begin{figure}[ht]
\centering
\includegraphics[width=\linewidth]{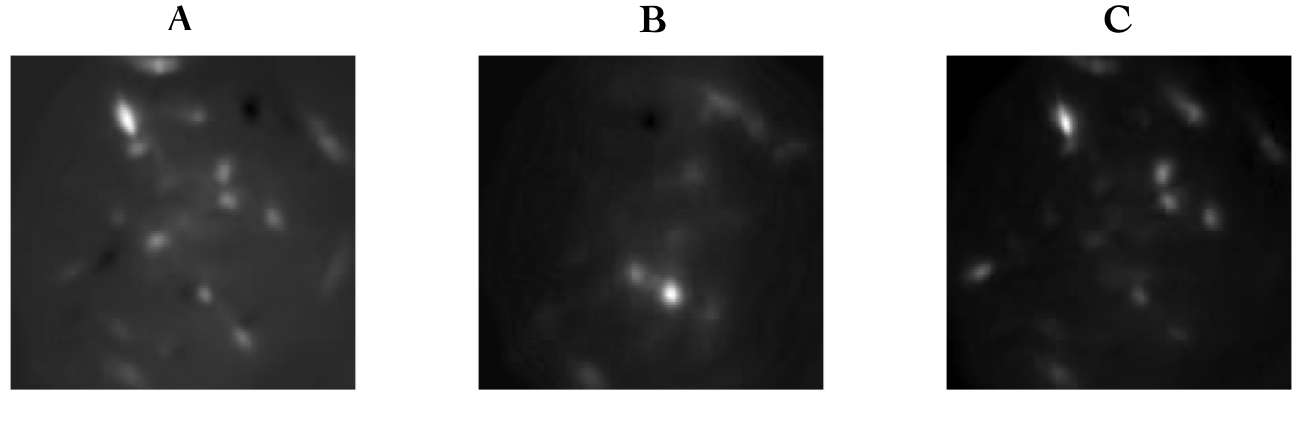}  

\begin{tikzpicture}[scale=0.78]
\begin{axis}[ybar, enlargelimits=0.25, ymax =2.8, ymin=0.5,
    ylabel={\textbf{Tensor Entropy}}, 
    symbolic x coords={A,B,C},
    xtick=data, ylabel near ticks, xticklabels={\textbf{A},\textbf{B},\textbf{C}}, title=\textbf{D}, xlabel=\textbf{Phases}, legend pos = outer north east,,xtick pos=left,ytick pos=left
    ]
\addplot+[blue, mark options={draw=blue,fill=blue}, opacity=.7] coordinates {(A,0.6931) (B,0) (C,1.0114)};
\addlegendentry{$k=2$ w.t. 0.88}
\addplot+[black, mark options={draw=black,fill=black}, opacity=.7] coordinates {(A,0) (B,0) (C,0)};
\addlegendentry{$k=2$ w.t. 0.93}
\addplot+[red, mark options={draw=red,fill=red}, opacity=.7] coordinates {(A,2.6617) (B,2.6193) (C,2.7538)};
\addlegendentry{$k=3$ w.t. 0.88}
\addplot+[orange, mark options={draw=orange,fill=orange}, opacity=.7] coordinates {(A,2.3277) (B,1.3358) (C,2.5503)};
\addlegendentry{$k=3$ w.t. 0.93}

\end{axis}
\end{tikzpicture}
\caption{\textbf{Mouse neuron endomicroscopy features.} (A), (B) and (C) First eigenfaces of the three phases - fed, fast and re-fed. (D) Tensor entropies of the $k$-uniform hypergraphs constructed from the corresponding three phases with $k=2,3$ (here w.t. stands for ``with threshold'').}
\label{fig:6}
\end{figure}

The results are shown in Figure \ref{fig:6}, in which (A), (B) and (C) are the first eigenfaces of the corresponding three phases showing the dominant features in these phases. For computing the entropy, we choose the cutoff threshold to be 0.93 in the construction of edge/hyperedge. In Figure \ref{fig:6}D, the von Neumann entropy ($k=2$) stays constant because the threshold is too high to generate edges in the graph model. However, the tensor entropy ($k=3$) is able to capture changes in neuronal activity, which is lower during the fast phase and higher during the fed/re-fed phase. If we lower the threshold, a similar pattern is observed for $k=2$. To maintain the model accuracy, we want to keep the threshold as high as possible. This supports use of tensor entropy over von Neumann entropy. As validation for using tensor entropy in biological data, we find that  mouse neuron activation patterns can be more accurately captured through 3-uniform hypergraphs. Under the threshold 0.93, the two hypergraphs for the fed and re-fed phases contain a number of common hyperedges. These hyperedges are mainly composed of vertices with high degrees, representing scenarios where more than two neurons synchronize, or ``co-fire'', in the mouse hypothalamus. This suggests that these neurons are involved in mouse appetite regulation, which is not captured using the graph model.

\subsection{Cellular reprogramming}
Cellular reprogramming is a process that introduces proteins called transcription factors as a control mechanism for transforming one cell type into another. The unbiased genome-wide technology of chromosome conformation capture (Hi-C) has been used to capture the dynamics of reprogramming \cite{Rajapakse711,RIED20171,liu_2018}. However, the pairwise contacts from Hi-C data fail to include the multiway interactions of chromatin. Furthermore, the notion of transcription factories supports the existence of simultaneous interactions involving mutiple genomic loci \cite{cook_2018}, implying that the human genome configuration can be represented by a hypergraph. Therefore, in this example, we use 3-uniform hypergraphs to partially recover the 3D configuration of the genome based on the multi-correlation (\ref{eq:99}) from Hi-C matrices. We believe that such reconstruction can provide 
more information about genome structure and patterns, compared to the pairwise Hi-C contacts. We use a cellular reprogramming dataset, containing normalized Hi-C data from fibroblast proliferation and MyoD-mediated fibroblast reprogramming (MyoD is the transcription factor used for control) for Chromosome 14 at 1MB resolution with a total of 89 genomic loci. Our goal is to quantitatively detect a bifurcation in the fibroblast proliferation and reprogramming data, and accurately identify the critical transition point between cell identities during reprogramming. The results are shown in Figure \ref{fig:11}. We can clearly observe a bifurcation between the two trajectories using the tensor entropy of the 3-uniform hypergraphs recovered from the Hi-C measurements. Crucially, the critical transition point marked in Figure \ref{fig:11}A is consistent with the ground-truth statistic provided in \cite{liu_2018}. In contrast, the von Neumann entropy cannot provide adequate information about the bifurcation and critical transition point, if one analyzes the Hi-C measurements as adjacency matrices. The two trajectories are separate from the beginning, see Figure \ref{fig:11}B.

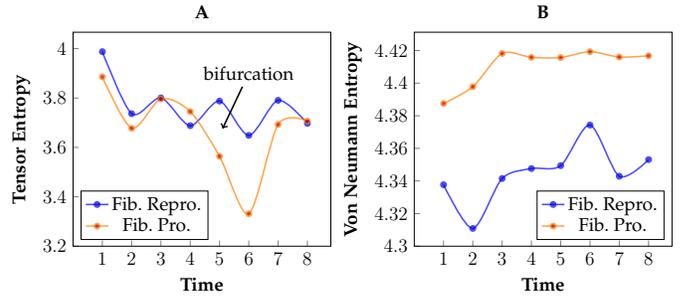
\begin{figure}[ht]
\centering
\begin{tikzpicture}[scale=0.5][font=\Large]
\begin{axis}[ylabel near ticks,
xlabel=\textbf{Time}, xtick={1,2,3,4,5,6,7,8}, ylabel=\textbf{Tensor Entropy}, xmin=0, ymin=3.2,title=\Large\textbf{A},legend pos = south west,xtick pos=left,ytick pos=left]
\addplot+[smooth,very thick,mark=*,opacity=.7] plot coordinates
{ (1,3.9876) (2,3.7366) (3,3.8004) (4, 3.6880) (5,3.7877) (6,3.6486) (7,3.7906) (8,3.6973) };
\addlegendentry{Fib. Repro.}
\addplot+[smooth,very thick,mark=*,opacity=.7,orange] plot coordinates
{ (1,3.8857) (2,3.6776) (3,3.7963) (4, 3.7451) (5,3.5641) (6,3.3314) (7,3.6927) (8,3.7067) };
\addlegendentry{Fib. Pro.}
\node (dest) at (5,3.65) {};
\node (start) at (6, 3.9) {bifurcation};
\draw[->, very thick](start)--(dest);
\end{axis}
\end{tikzpicture}
\begin{tikzpicture}[scale=0.5][font=\Large]
\begin{axis}[ylabel near ticks,
xlabel=\textbf{Time}, xtick={1,2,3,4,5,6,7,8}, ylabel=\textbf{Von Neumann Entropy}, xmin=0, ymin=4.3,title=\Large\textbf{B},legend pos = south east,xtick pos=left,ytick pos=left]
\addplot+[smooth,very thick,mark=*,opacity=.7] plot coordinates
{ (1,4.3377) (2,4.3109) (3,4.3415) (4,4.3476) (5,4.3494) (6, 4.3743) (7,4.3429) (8,4.3531) };
\addlegendentry{Fib. Repro.}
\addplot+[smooth,very thick,mark=*,opacity=.7,orange] plot coordinates
{ (1,4.3875) (2,4.3978) (3,4.4182) (4,4.4158) (5,4.4157) (6,4.4193) (7,4.4160) (8,4.4168) };
\addlegendentry{Fib. Pro.}
\end{axis}
\end{tikzpicture}
\caption{\textbf{Cellular reprogramming features.} (A) Tensor entropies of the uniform hypergraphs recovered from Hi-C measurements with multi-correlation cutoff threshold 0.95. (B) Von Neuman entropies of the binarized Hi-C matrices with weight cutoff threshold 0.95.}
\label{fig:11}
\end{figure}

\subsection{Algorithm run time comparison}\label{sec:train}
In this example, the $k$-uniform hypergraphs are constructed with $n$ vertices by forming a strip structure in which every pair of connected hyperedges only contains one common vertex. We compare the computational efficiency of the SVD-based Algorithm \ref{alg:1} and the TTD-based Algorithm \ref{alg:2} in computing the tensor entropy. The results are shown in Figure \ref{fig:9}. For the TTD-based entropy computations, we assume that all the adjacency and degree tensors of the uniform hypergraphs are already provided in the TT-format. Evidently, Algorithm \ref{alg:2} is more time efficient than Algorithm \ref{alg:1} for 4-uniform and 5-uniform hypergraphs with the strip structure as $n$ becomes larger, see Figure \ref{fig:9}. Particularly, when $k=5$, the TTD-based algorithm exhibits a huge time advantage as predicted in the computation complexity. The time curve from the SVD-based Algorithm \ref{alg:1} increases exponentially, while it grows at a much slower rate if using Algorithm \ref{alg:2}. In the meantime, we compute the relative errors between the tensor entropies computed from the two algorithms, all of which are within $10^{-14}$.   

\begin{figure}[ht]
\begin{tikzpicture}[scale=1]
\begin{axis}[ymode=log, ytick={1e-5, 1e-4, 1e-3, 1e-2, 1e-1, 1, 10, 100}, xlabel={\textbf{Num. of Vertices $\boldsymbol{n}$}}, ylabel={\textbf{Computational Time (sec)}}, ylabel near ticks, legend pos=north west, xlabel near ticks,xtick pos=left,ytick pos=left,xtick align=center]
\addplot+[smooth, very thick,mark=*,blue, mark options={draw=blue,fill=blue}, opacity=.7] coordinates {(4,0.0019) (7,0.0052) (10,0.0035) (13,0.0059) (16,0.0047) (19,0.0031) (22,0.0086) (25,0.0052) (28,0.0067) (31,0.0056) (34,0.0081) (37,0.0080) (40,0.0106) (43,0.0096) (46,0.0126) (49,0.0160) (52,0.0192) (55,0.02) (58,0.0216) (61,0.0261)};
\addlegendentry{$k=4$ (TTD)}
\addplot+[smooth, very thick,mark=*, dashed, blue, mark options={draw=blue,fill=blue}, opacity=.7] coordinates {(4,0.0001527) (7,0.0027) (10,0.0025) (13,0.0036) (16,0.0020) (19,0.0068) (22,0.0133) (25,0.0103) (28,0.0174) (31,0.0217) (34,0.0255) (37,0.0530) (40,0.0636) (43,0.0835) (46,0.1119) (49,0.1511) (52,0.2162) (55,0.2530) (58,0.3306) (61,0.4059)};
\addlegendentry{$k=4$ (SVD)}
\addplot+[smooth, very thick,mark=*,orange, mark options={draw=orange,fill=orange}, opacity=.7] coordinates {(5,0.003) (9,0.002) (13,0.0026) (17,0.0032) (21,0.0036) (25,0.0046) (29,0.0057) (33,0.0047) (37,0.0105) (41,0.0146) (45,0.0135) (49,0.0197) (53,0.0336) (57,0.0499) (61,0.0502)};
\addlegendentry{$k=5$ (TTD)}
\addplot+[smooth, dashed, very thick,mark=*,orange, mark options={draw=orange,fill=orange}, opacity=.7] coordinates {(5,0.000652) (9,0.0026) (13,0.0132) (17,0.0319) (21,0.0881) (25,0.2383) (29,0.5461) (33,1.1114) (37,1.9470) (41,3.4269) (45,5.8612) (49,11.4317) (53,19.5933) (57,45.0417) (61,70.8018)};
\addlegendentry{$k=5$ (SVD)}
\end{axis}
\end{tikzpicture}
\caption{\textbf{Computational time comparisons between the SVD-based and TTD-based algorithms.} For the TTD-based entropy computation, we reported the times of left- and right-orthonormalization and economy-size matrix SVD. For the SVD-based entropy computation, we only reported the time of economy-size matrix SVD. For the purpose of accuracy, we ran each algorithm 10 times and took the average of the computational times.}
\label{fig:9}
\end{figure}
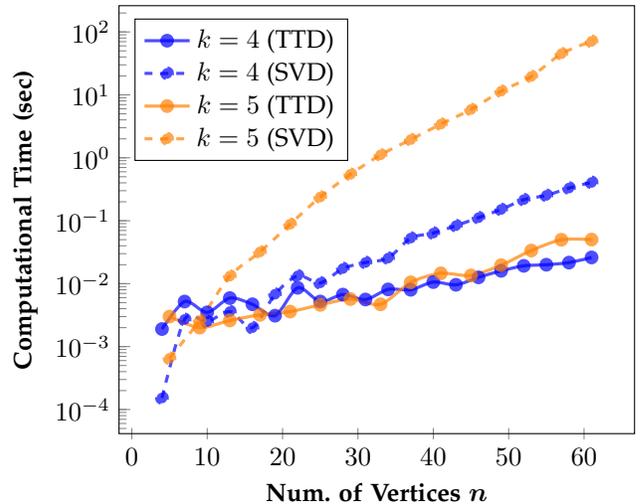

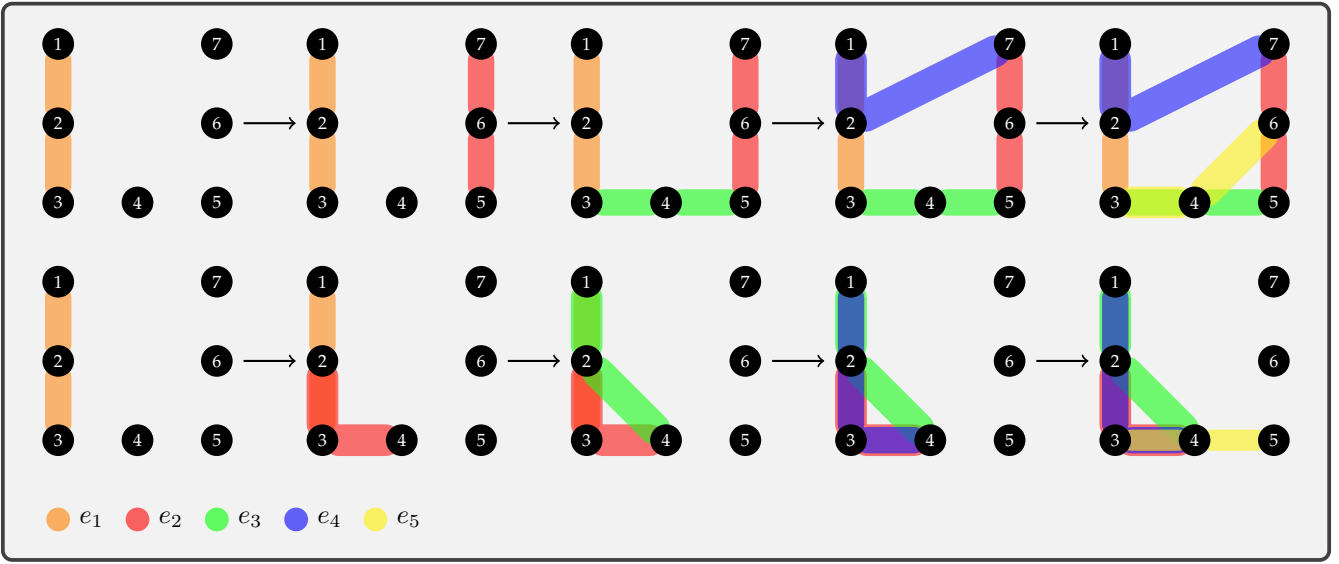
\begin{figure*}[htb!]
\centering
\tcbox[colback=mygray]{
\begin{tikzpicture}
\node[vertex,text=white,scale=0.7] (v1) {1};
\node[vertex,below of=v1,text=white,scale=0.7] (v2) {2};
\node[vertex,below of=v2,text=white,scale=0.7] (v3) {3};
\node[vertex,right of=v3,text=white,scale=0.7] (v4) {4};
\node[vertex,right of=v4,text=white,scale=0.7] (v5) {5};
\node[vertex,above of=v5,text=white,scale=0.7] (v6) {6};
\node[vertex,above of=v6,text=white,scale=0.7] (v7) {7};

\node[vertex,right of=v7, text=white,scale=0.7,xshift=5mm] (v8) {1};
\node[vertex,right of=v6,text=white,scale=0.7,xshift=5mm] (v9) {2};
\node[vertex,right of=v5,text=white,scale=0.7,xshift=5mm] (v10) {3};
\node[vertex,right of=v10,text=white,scale=0.7] (v11) {4};
\node[vertex,right of=v11,text=white,scale=0.7] (v12) {5};
\node[vertex,above of=v12,text=white,scale=0.7] (v13) {6};
\node[vertex,above of=v13,text=white,scale=0.7] (v14) {7};

\node[vertex,right of=v14, text=white,scale=0.7,xshift=5mm] (v15) {1};
\node[vertex,right of=v13,text=white,scale=0.7,xshift=5mm] (v16) {2};
\node[vertex,right of=v12,text=white,scale=0.7,xshift=5mm] (v17) {3};
\node[vertex,right of=v17,text=white,scale=0.7] (v18) {4};
\node[vertex,right of=v18,text=white,scale=0.7] (v19) {5};
\node[vertex,above of=v19,text=white,scale=0.7] (v20) {6};
\node[vertex,above of=v20,text=white,scale=0.7] (v21) {7};

\node[vertex,right of=v21, text=white,scale=0.7,xshift=5mm] (v22) {1};
\node[vertex,right of=v20,text=white,scale=0.7,xshift=5mm] (v23) {2};
\node[vertex,right of=v19,text=white,scale=0.7,xshift=5mm] (v24) {3};
\node[vertex,right of=v24,text=white,scale=0.7] (v25) {4};
\node[vertex,right of=v25,text=white,scale=0.7] (v26) {5};
\node[vertex,above of=v26,text=white,scale=0.7] (v27) {6};
\node[vertex,above of=v27,text=white,scale=0.7] (v28) {7};

\node[vertex,right of=v28, text=white,scale=0.7,xshift=5mm] (v29) {1};
\node[vertex,right of=v27,text=white,scale=0.7,xshift=5mm] (v30) {2};
\node[vertex,right of=v26,text=white,scale=0.7,xshift=5mm] (v31) {3};
\node[vertex,right of=v31,text=white,scale=0.7] (v32) {4};
\node[vertex,right of=v32,text=white,scale=0.7] (v33) {5};
\node[vertex,above of=v33,text=white,scale=0.7] (v34) {6};
\node[vertex,above of=v34,text=white,scale=0.7] (v35) {7};

\node[vertex,below of=v3, text=white,scale=0.7] (w1) {1};
\node[vertex,below of=w1,text=white,scale=0.7] (w2) {2};
\node[vertex,below of=w2,text=white,scale=0.7] (w3) {3};
\node[vertex,right of=w3,text=white,scale=0.7] (w4) {4};
\node[vertex,right of=w4,text=white,scale=0.7] (w5) {5};
\node[vertex,above of=w5,text=white,scale=0.7] (w6) {6};
\node[vertex,above of=w6,text=white,scale=0.7] (w7) {7};

\node[vertex,right of=w7, text=white,scale=0.7,xshift=5mm] (w8) {1};
\node[vertex,right of=w6,text=white,scale=0.7,xshift=5mm] (w9) {2};
\node[vertex,right of=w5,text=white,scale=0.7,xshift=5mm] (w10) {3};
\node[vertex,right of=w10,text=white,scale=0.7] (w11) {4};
\node[vertex,right of=w11,text=white,scale=0.7] (w12) {5};
\node[vertex,above of=w12,text=white,scale=0.7] (w13) {6};
\node[vertex,above of=w13,text=white,scale=0.7] (w14) {7};

\node[vertex,right of=w14, text=white,scale=0.7,xshift=5mm] (w15) {1};
\node[vertex,right of=w13,text=white,scale=0.7,xshift=5mm] (w16) {2};
\node[vertex,right of=w12,text=white,scale=0.7,xshift=5mm] (w17) {3};
\node[vertex,right of=w17,text=white,scale=0.7] (w18) {4};
\node[vertex,right of=w18,text=white,scale=0.7] (w19) {5};
\node[vertex,above of=w19,text=white,scale=0.7] (w20) {6};
\node[vertex,above of=w20,text=white,scale=0.7] (w21) {7};

\node[vertex,right of=w21, text=white,scale=0.7,xshift=5mm] (w22) {1};
\node[vertex,right of=w20,text=white,scale=0.7,xshift=5mm] (w23) {2};
\node[vertex,right of=w19,text=white,scale=0.7,xshift=5mm] (w24) {3};
\node[vertex,right of=w24,text=white,scale=0.7] (w25) {4};
\node[vertex,right of=w25,text=white,scale=0.7] (w26) {5};
\node[vertex,above of=w26,text=white,scale=0.7] (w27) {6};
\node[vertex,above of=w27,text=white,scale=0.7] (w28) {7};

\node[vertex,right of=w28, text=white,scale=0.7,xshift=5mm] (w29) {1};
\node[vertex,right of=w27,text=white,scale=0.7,xshift=5mm] (w30) {2};
\node[vertex,right of=w26,text=white,scale=0.7,xshift=5mm] (w31) {3};
\node[vertex,right of=w31,text=white,scale=0.7] (w32) {4};
\node[vertex,right of=w32,text=white,scale=0.7] (w33) {5};
\node[vertex,above of=w33,text=white,scale=0.7] (w34) {6};
\node[vertex,above of=w34,text=white,scale=0.7] (w35) {7};

\path [->,shorten >=4pt,shorten <=4pt, thick](v6) edge node[left] {} (v9);
\path [->,shorten >=4pt,shorten <=4pt,thick](v13) edge node[left] {} (v16);
\path [->,shorten >=4pt,shorten <=4pt,thick](v20) edge node[left] {} (v23);
\path [->,shorten >=4pt,shorten <=4pt,thick](v27) edge node[left] {} (v30);

\path [->,shorten >=4pt,shorten <=4pt,thick](w6) edge node[left] {} (w9);
\path [->,shorten >=4pt,shorten <=4pt,thick](w13) edge node[left] {} (w16);
\path [->,shorten >=4pt,shorten <=4pt,thick](w20) edge node[left] {} (w23);
\path [->,shorten >=4pt,shorten <=4pt,thick](w27) edge node[left] {} (w30);

\begin{pgfonlayer}{background}
\begin{scope}[transparency group,opacity=.9]
\draw[edge,color=orange] (v1) -- (v2) -- (v3);

\draw[edge,color=orange] (v8) -- (v9) -- (v10);
\draw[edge,color=red] (v12) -- (v13) -- (v14);

\draw[edge,color=orange] (v15) -- (v16) -- (v17);
\draw[edge,color=red] (v19) -- (v20) -- (v21);
\draw[edge,color=green] (v17) -- (v18) -- (v19);

\draw[edge,color=orange] (v22) -- (v23) -- (v24);
\draw[edge,color=red] (v26) -- (v27) -- (v28);
\draw[edge,color=green] (v24) -- (v25) -- (v26);
\draw[edge,color=blue,line width=12pt] (v22) -- (v23) -- (v28);

\draw[edge,color=orange] (v29) -- (v30) -- (v31);
\draw[edge,color=red] (v33) -- (v34) -- (v35);
\draw[edge,color=green] (v31) -- (v32) -- (v33);
\draw[edge,color=blue,line width=12pt] (v29) -- (v30) -- (v35);
\draw[edge,color=yellow,line width=12pt] (v31) -- (v32) -- (v34);

\draw[edge,color=orange] (w1) -- (w2) -- (w3);

\draw[edge,color=orange] (w8) -- (w9) -- (w10);
\draw[edge,color=red,line width=12pt] (w9) -- (w10) -- (w11);

\draw[edge,color=orange] (w15) -- (w16) -- (w17);
\draw[edge,color=red,line width=12pt] (w16) -- (w17) -- (w18);
\draw[edge,color=green,line width=12pt] (w15) -- (w16) -- (w18);

\draw[edge,color=orange] (w22) -- (w23) -- (w24);
\draw[edge,color=red,line width=12pt] (w23) -- (w24) -- (w25);
\draw[edge,color=green,line width=12pt] (w22) -- (w23) -- (w25);
\draw[edge,color=blue,line width=10pt] (w22) -- (w24) -- (w25);

\draw[edge,color=orange] (w29) -- (w30) -- (w31);
\draw[edge,color=red,line width=12pt] (w30) -- (w31) -- (w32);
\draw[edge,color=green,line width=12pt] (w29) -- (w30) -- (w32);
\draw[edge,color=blue,line width=10pt] (w29) -- (w31) -- (w32);
\draw[edge,color=yellow,line width=8pt] (w31) -- (w32) -- (w33);

\end{scope}
\end{pgfonlayer}
\node[elabel,color=orange,fill,opacity=.6,fill opacity=.6,below of=w3, label=right:\(e_1\)]  (e1) {};
\node[elabel,right of=e1,color=red,fill,opacity=.6,fill opacity=.6,label=right:\(e_2\)]  (e2) {};
\node[elabel,right of=e2,color=green,fill,opacity=.6,fill opacity=.6,label=right:\(e_3\)]  (e3) {};
\node[elabel,right of=e3,color=blue,fill,opacity=.6,fill opacity=.6,label=right:\(e_4\)]  (e4) {};
\node[elabel,right of=e4,color=yellow,fill,opacity=.6,fill opacity=.6,label=right:\(e_5\)]  (e5) {};

\end{tikzpicture}}
\caption{\textbf{Eigenvalue entropy maximization/minimization.} The top row describes the first five stages of the eigenvalue entropy maximization evolution with a growing number of hyperedges in the order of $e_1=\{1,2,3\}$, $e_2=\{5,6,7\}$, $e_3=\{3,4,5\}$, $e_4=\{1,2,7\}$ and $e_ 5=\{3,4,6\}$. The eigenvalue entropy $\textsc{S}_{\max}^{(j)}$ = 4.4910, 5.6342, 5.8608, 5.9490 and 6.0091 for $j=1,2,3,4,5$. The bottom row reports the first five stages of the eigenvalue entropy minimization process with a growing number of hyperedges in the order of $e_1=\{1,2,3\}$, $e_2=\{2,3,4\}$, $e_3=\{1,2,4\}$, $e_4=\{1,3,4\}$ and $e_ 5=\{3,4,5\}$. The eigenvalue entropy $\textsc{S}_{\min}^{(j)}$ = 4.4910, 5.3604, 5.4434, 5.4715 and 5.6334 for $j=1,2,3,4,5$. All the tensor eigenvalues of the Laplacian tensors in this experiment are computed from the MATLAB Toolbox TenEig \cite{teneig,doi:10.1137/15M1010725}.}
\label{fig:2}
\end{figure*}

\section{Discussion}\label{sec:dis}
The first five numerical examples reported here highlight that the $k$-mode singular values computed from the HOSVD of the Laplacian tensors can provide nice predictions of structural properties for uniform hypergraphs. This method can also be used for anomaly detection in the context of dynamics as we demonstrated in the mouse neuron endomicroscopy and cellular reprogramming datasets. However, more theoretical and numerical investigations are required to assess the real advantages of hypergraphs versus normal graphs, and is an important avenue of future research. Moreover, as we pointed out in section \ref{sec:2.3}, many simple structure tensors can be directly created in the TT-format without requiring construction of the full representations. For example, Oseledets et al. \cite{tttoolbox} built the Laplacian operator in the TT-format for the discretized heat equations. We believe that similar results can happen to the adjacency, degree and Laplacian tensors for uniform hypergraphs. 

Instead of looking at the $k$-mode singular values, we can consider the tensor eigenvalues in defining the tensor entropy. We will refer to it as the eigenvalue entropy later. See Appendix B for a short introduction to tensor eigenvalues for supersymmetric tensors. Based on the tensor eigenvalue formulations, we can establish the eigenvalue entropy measure for uniform hypergraphs.
\begin{definition}
Let \textsf{G} be a $k$-uniform hypergraph with $n$ vertices. The eigenvalue entropy of \textsf{G} is defined by
\begin{equation}\label{eq:13}
\textsc{S} = -\sum_{j=1}^{d} |\hat{\lambda}_j|\ln{|\hat{\lambda}_j|},
\end{equation}
where, $|\hat{\lambda}_j|$ are the normalized modulus of the eigenvalues of the Laplacian tensor \textsf{L} such that $\sum_{j=1}^d|\hat{\lambda}_j| = 1$, and $d=n(k-1)^{n-1}$ is the total number of the eigenvalues.
\end{definition}
The convention $0\ln{0}=0$ is used if $|\hat{\lambda}_j|=0$. We can use other tensor eigenvalue notions including H-eigenvalue, E-eigenvalue and Z-eigenvalue with a corresponding $d$ to fit into the formula. For curiosity, we repeat the hyperedge growth model using the eigenvalue entropy, see Figure \ref{fig:2}. The entropy minimization evolution trajectory is the same for the first five stages  in which \textit{cliques} are formed. The maximization evolution trajectory becomes different from the fourth stage after the hypergraph is connected, in which short path lengths and high level of nontrivial symmetricity are no longer the factors that maximize the entropy. In addition, computing the eigenvalues of a tensor is an NP hard problem \cite{Hillar:2013:MTP:2555516.2512329}.  Hence, the eigenvalue entropy may not be used to predict the structural properties for uniform hypergraphs, but it might contain other unknown features that are required to explore in the future.


Furthermore, the notion of robustness for uniform hypergraphs is an important topic. In graph theory, one of the popular measures is called effective resistance \cite{ELLENS20112491}. The authors in \cite{resistence} show that the effective graph resistance can be written in terms of the reciprocals of graph Laplacian eigenvalues, and robust networks have small effective graph resistance. Hence, we attempt to establish similar relationship using the $k$-mode singular values from the Laplacian tensors to describe the robustness of uniform hypergraphs. 

\begin{definition}
Let \textsf{G} be a connected $k$-uniform hypergraph with $n$ vertices. The effective resistance of \textsf{G} is defined by
\begin{equation}\label{eq:20}
    \textsc{R} = n\sum_{j=1}^n \frac{1}{\gamma_j},
\end{equation}
where, $\gamma_j$ are the $k$-mode singular values of \textsf{L}.
\end{definition}

If a uniform hypergraph is non-connected, then the effective resistance $\textsc{R}=\infty$.  Based on (\ref{eq:20}), we compute the effective resistance of the uniform hypergraphs $\textsf{G}_{\max}^{(j)}$ in the hyperedge growth model example, denoted by $\textsc{R}_{\max}^{(j)}$.  Similar to the effective graph resistance, $\textsc{R}_{\max}^{(j)}$ strictly decreases when hyperedges are added and achieves the minimum at the final step when the hypergraph is complete, see Table \ref{tab:1}. We can also expect that the smaller the effective resistance is, the more robust the uniform hypergraph. We believe that the effective resistance (\ref{eq:20}) is a good measure for uniform hypergraph robustness, but more theoretical and numerical support is needed to verify this hypothesis. 

\begin{table}[ht]
\caption{\textbf{The effective resistance and tensor entropy.}}
\centering
\begin{tabular}{|l|l|l|l|l|l|}
\hline
Step & $j=3$ & $j=5$ & $j=15$ & $j=25$ & $j=35$ \\ \hline
$\textsc{R}_{\max}^{(j)}$ &  34.8384   &  20.9432   &  7.3863   &  4.4819   & 3.2166    \\ \hline
$\textsc{S}_{\max}^{(j)}$ &  1.9037   &  1.9359   & 1.9430    & 1.9448    &  1.9456   \\ \hline
\end{tabular}
\label{tab:1}
\end{table}

\section{Conclusion}\label{sec:5}
In this paper, we proposed a new notion of entropy for uniform hypergraphs based on the tensor higher-order singular value decomposition. The $k$-mode singular values of Laplacian tensors provide nice interpretations regarding the structural properties of uniform hypergraphs. The tensor entropy heavily depends on the vertex degrees, path lengths, clustering coefficients and nontrivial symmetricity. We investigated the lower and upper bounds of the entropy and provided the entropy formula for complete uniform hypergraphs. A TTD-based computational framework is proposed for computing the tensor entropy efficiently. We also applied this spectral measure to real biological networks for anomaly detection, and achieve better performances compared to the von Neumann graph entropy. As discussed in section \ref{sec:dis}, the detailed relations between tensor eigenvalues and entropy, and the theoretical investigations of hypergraph robustness require further exploration. Controllability and influenceability of uniform hypergraphs are also important for future research.


%

%

\section*{Acknowledgment}
We thank Dr. Frederick Leve at the Air Force Office of Scientific Research (AFOSR) for support and encouragement. This work is supported in part by AFOSR Award No: FA9550-18-1-0028, the Smale Institute, the DARPA/MTO Lifelong Learning Machines program, and the  DARPA/ITO Guaranteeing AI Robustness Against Deception program.

\ifCLASSOPTIONcaptionsoff
  \newpage
\fi



%

%

%
\begin{IEEEbiography}[{\includegraphics[width=1in,height=1.25in,clip,keepaspectratio]{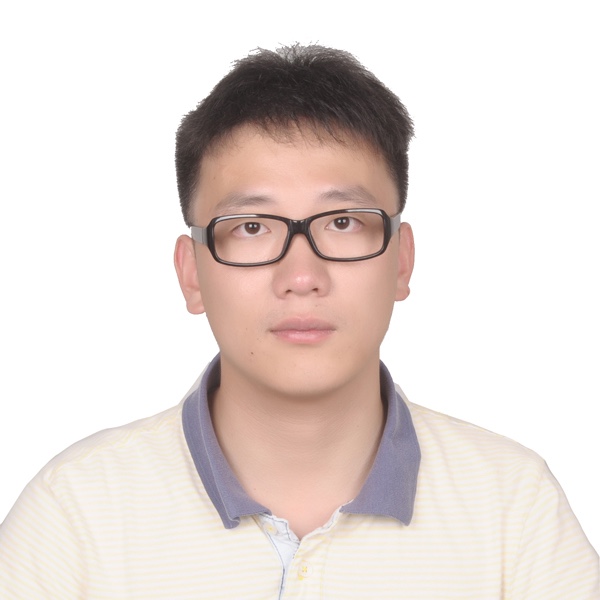}}]{Can Chen} is currently a Ph.D. candidate in Applied \& Interdisciplinary Mathematics and a Master's student in Electrical \& Computer Engineering at the University of Michigan, Ann Arbor. He received B.S. degree in Mathematics at the University of California, Irvine in 2016. His research is focused on data-guided control of multiway dynamical systems. 
\end{IEEEbiography}

\begin{IEEEbiography}[{\includegraphics[width=1in,height=1.25in,clip,keepaspectratio]{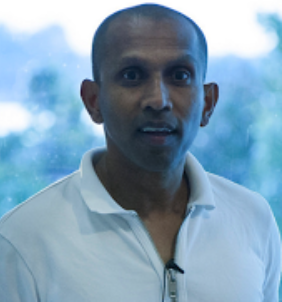}}]{Indika Rajapakse} is currently an Associate Professor of Computational Medicine \& Bioinformatics, in the Medical School, and an Associate Professor of Mathematics at the University of Michigan, Ann Arbor. He is also a member of the Smale Institute. His research is at the interface of biology, engineering and mathematics. His areas include dynamical systems, networks, mathematics of data and cellular reprogramming.

\end{IEEEbiography}


%
%
%


\vfill


\end{document}